\title[Geometric Properties of Constraint Set in Safe Optimization]{The Impact of the Geometric Properties of the Constraint Set in Safe Optimization with Bandit Feedback}
\DeclareMathOperator*{\argmin}{arg\,min}
\DeclareMathOperator*{\argmax}{arg\,max}
\author{%
 \Name{Spencer Hutchinson} \Email{shutchinson@ucsb.edu}\\
 \addr University of California, Santa Barbara
 \AND
 \Name{Berkay Turan} \Email{bturan@ucsb.edu}\\
 \addr University of California, Santa Barbara%
 \AND
 \Name{Mahnoosh Alizadeh} \Email{alizadeh@ucsb.edu}\\
 \addr University of California, Santa Barbara%
}
\begin{document}

\newtheorem{assumption}{Assumption}
\RestyleAlgo{ruled}

\maketitle

\begin{abstract}%
 We consider a safe optimization problem with bandit feedback in which an agent sequentially chooses actions and observes responses from the environment, with the goal of maximizing an arbitrary function of the response while respecting stage-wise constraints.
 We propose an algorithm for this problem, and study how the geometric properties of the constraint set impact the regret of the algorithm.
 In order to do so, we introduce the notion of the \emph{sharpness} of a particular constraint set, which characterizes the difficulty of performing learning within the constraint set in an uncertain setting. 
 This concept of sharpness allows us to identify the class of constraint sets for which the proposed algorithm is guaranteed to enjoy sublinear regret.
 Simulation results for this algorithm support the sublinear regret bound and provide empirical evidence that the sharpness of the constraint set impacts the performance of the algorithm.
\end{abstract}

\begin{keywords}%
  Safe Learning, Bandits, Optimization%
\end{keywords}

\section{Introduction}

As contemporary learning and control paradigms expand into domains with stringent safety requirements, the need for control mechanisms that can provide such safety guarantees has grown significantly.
This has resulted in a plethora of novel safe learning problems in the literature through the lens of model predictive control~\cite{koller2018learning,hewing2020learning,chen2022robust}, reinforcement learning~\cite{junges2016safety,garcia2015comprehensive}, optimization~\cite{usmanova2019safe,fereydounian2020safe}, bandits~\cite{sui2015safe}, and many others.
Such problems are well suited for applications in which the control algorithm interacts with humans, which introduces uncertainties that need to be considered to ensure safety (e.g., clinical trials~\cite{sui2017correlational}, robotic systems \cite{berkenkamp2021bayesian}, and resource allocation in societal infrastructure through pricing~\cite{hutchinson2022safe}).

In this work, we are interested in a sequential decision making problem, where the decisions must be within an arbitrary and unknown compact safety set. We consider a safe optimization framework with bandit feedback, where the reward and the constraint set are known non-linear functions of the matrix multiplication of the action with an unknown parameter.
Compared to the existing literature, this problem is uniquely challenging because (1) both the decisions and the feedback from the environment are vectors, (2) the reward is an arbitrary function of the decision vector, and (3) the safety constraint on the decision vector is an arbitrary compact set. These challenges are however warranted, given that problems of this form appear in many real-world applications. For example, power flow constraints are nonlinear and nonconvex in general (\cite{molzahn2017survey}) and often solved with (nonlinear) convex relaxations (e.g. \cite{bai2008semidefinite,farivar2013branch}). 

We handle the challenge general safety sets present by introducing a geometric property of a set, which we call \emph{sharpness}, that is related to how difficult it is to perform learning within a particular safety set. This allows us to characterize the performance of our learning algorithm, measured in terms of regret, as a function of the sharpness of the safety set. Accordingly, we identify the class of safety sets (that includes all convex sets) for which we can establish a sublinear regret bound.

\textbf{Related work: }Sequential decision making under uncertainty with safety constraints has been an increasingly popular area of research among scholars. 
In particular, there have been various works that study optimization problems with uncertain constraint functions.
Depending on the specific problem setting, the constraint function is assumed to be linear \cite{usmanova2019safe,chaudhary2022safe, fereydounian2020safe}, have a Gaussian process prior \cite{sui2015safe,sui2018stagewise, berkenkamp2021bayesian} or be generally Lipschitz \cite{usmanova2020safe}.
In all of these works, the constraint is specified as requiring that the output of the (unknown) function is in the nonpositive orthant (i.e. $g(x) \leq 0$), whereas we model the constraint as requiring that the output of the unknown function is in some arbitrary set (i.e. $g(x) \in \mathcal{E}$ for some arbitrary $\mathcal{E}$).
This model warrants a unique analysis approach where we study how the geometry of this arbitrary constraint set impacts the performance of our algorithm.

Uncertain constraints have also been studied in the stochastic linear bandit setting.
In the conventional stochastic linear bandit setting (without uncertain constraints), an agent chooses an action vector at each time step and then receives a reward that is linear in expectation with respect to the action, with the aim of maximizing her cumulative reward (see e.g. \cite{dani2008stochastic,abbasi2011improved}).
One type of stochastic linear bandit problem with uncertain constraints is so-called conservative linear bandits \cite{kazerouni2017conservative,moradipari2020stage}, where the expected reward at each round needs to be close to a baseline reward.
Others consider a setting where there is an auxiliary constraint function.
Specifically, in \cite{amani2019linear} the constraint function depends on the (linearly transformed) reward parameter, while in \cite{pacchiano2021stochastic,moradipari2021safe, wang2022best} the constraint function is unrelated to the reward parameter and the learner receives noisy feedback of it.
Similar to stochastic linear bandits, we consider a problem where the expected response from the environment is a linear function of the action.
However, we take the response from the environment to be a vector rather than a scalar, consider the reward to be an arbitrary function of the expected response, and require the expected response from the environment to be within an arbitrary set.
The main novelty of this problem is the arbitrary constraint set, which necessitates new analysis techniques that might find broader applicability.

\textbf{Notation}:
For a vector $v \in \mathbb{R}^d$ and positive definite matrix $A \in \mathbb{R}^{d \times d}$, we denote the weighted 2-norm as $\| v \|_A = \sqrt{v^{\top} A v}$.
The minimum and maximum eigenvalues of a square matrix $M$ are denoted $\lambda_{min}(M)$ and $\lambda_{max} (M)$, respectively.
We denote the closed and open ball with radius $r$ and norm $\| \cdot \|$ as $\bar{\mathcal{B}}_{\| \cdot \|} (r)$ and $\mathcal{B}_{\| \cdot \|} (r)$, which are centered at the origin.
The condition number of a matrix $M$ is denoted as $\kappa(M)$.
For a set $\mathcal{D}$ and matrix $M$, we use the notation $M \mathcal{D} = \{M x : x \in \mathcal{D} \}$.
The set $\{1,2,...,n\}$ is denoted $[n]$.
We use $\Tilde{\mathcal{O}}$ to refer to big-O notation that ignores logarithmic factors.
We use the notation $\mathbf{e}_i$ to refer to a vector with $1$ as the $i$th position and $0$ everywhere else.
For a vector norm $\| \cdot \|$, the dual norm is denoted $\| \cdot \|_{\star}$.
The induced matrix $p$-norm is denoted $\| M \|_p$ for some matrix $M$, such that $\| M \|_p = \sup_{\| x \|_p = 1} \| A x \|_p$.
The Frobenius norm is denoted $\| M \|_F$ for some matrix $M$.

\section{Problem Setup}
We study a sequential decision-making problem where, in each round, an \emph{agent} chooses an \emph{action} and then the environment chooses a \emph{response} according to the action taken. 
Similar to stochastic linear bandits, we assume that the response from the environment is an unknown linear function of the action and that the agent observes the output of this linear function plus some noise.
However, our problem differs from stochastic linear bandits in that the response is multi-dimensional and the agent's goal is to accumulate \emph{reward}, which is an arbitrary known function of the response.
In our problem, the agent also needs to ensure that the response from the environment lies within a \emph{safety set} every round.

The details of the problem are described as follows. In each round $t \in [T]$, an agent chooses an action $x_t$ from the compact action set $\mathcal{A} \subset \mathbb{R}^d$ and then observes the noisy response $y_t := \Theta_* x_t + \epsilon_t$.
The matrix $\Theta_* \in \mathbb{R}^{n \times d}$ is an unknown parameter that is full rank, $\epsilon_t \in \mathbb{R}^n$ is random noise, and we have that $n \leq d$.
Upon choosing an action, the agent earns the reward $f(\Theta_* x_t)$, where the reward function $f:\mathbb{R}^n\rightarrow \mathbb{R}$ is known.
The agent needs to ensure that when it chooses actions, the response $\Theta_* x_t$ lies within the known compact safety set $\mathcal{E} \subset \mathbb{R}^n$ that has nonempty interior, or equivalently, that $x_t$ is in the \emph{unknown} feasible action set $\mathcal{X} := \{ x \in \mathcal{A} : \Theta_* x \in \mathcal{E} \}$.

With the actions that it chooses, the agent aims to maximize the cumulative reward that it achieves while ensuring that the safety constraint is satisfied for all rounds.
Therefore, the performance of the agent   can be measured with the cumulative regret, $R_T := \sum_{t=1}^T \left( f(\Theta_* x_*) - f(\Theta_* x_t) \right)$ where $x_* \in \argmax_{x \in \mathcal{\mathcal{X}}} f(\Theta_* x)$.

In the following two assumptions, we assume that the unknown parameter and feasible actions are bounded, and that the noise is subgaussian.
These assumptions are standard in related literature (e.g. \cite{abbasi2011improved,pacchiano2021stochastic}).

\begin{assumption}
\label{ass:bounds}
    For all $x$ in $\mathcal{A}$, there exists a constant $L$ such that $\| x \|_2 \leq L$. Additionally, there exists constant $S$ such that $\| \theta_*^i \|_2 \leq S$ for all $i \in [n]$, where $\theta_*^i$ is the $i$th row of $\Theta_*$. The constants $S$ and $L$ are known to the agent.
\end{assumption}

\begin{assumption}
\label{ass:noise}
    For all $t \in [T]$, the noise $\epsilon_t$ is element-wise conditionally $R$-subgaussian, such that given the history $\mathcal{F}_t = \sigma(x_1,x_2,...,x_{t+1},\epsilon_1, \epsilon_2, ...,\epsilon_t)$ and denoting the $i$th element of $\epsilon_t$ as $\epsilon_t^i$, it holds for all $i \in [n]$ that  $\mathbb{E}[\epsilon_t^i | \mathcal{F}_{t-1} ] = 0$ and $\mathbb{E}[e^{\lambda \epsilon_t^i} | \mathcal{F}_{t-1} ] \leq \exp(\frac{\lambda^2 R^2}{2}), \forall \lambda \in \mathbb{R}$.
    The constant $R$ is known to the agent.
\end{assumption}

We additionally assume that the reward function is Lipschitz.

\begin{assumption}
\label{ass:lipsch}
   $f$ is $M$-Lipschitz on $\mathcal{E}$ such that $| f(x) - f(y) | \leq M \| x - y \|_2$ for all $x,y$ in $\mathcal{E}$.
\end{assumption}

Lastly, we make an assumption which ensures that the knowledge provided to the agent by Assumption \ref{ass:bounds} is enough to choose initial actions that are safe.
Since it is known that $\Theta_*$ is in $\mathcal{C}^0 = \{ [\theta^1\ \theta^2\ ...\ \theta^n]^\top \in \mathbb{R}^{n \times d} : \| \theta^i \|_2 \leq S, \forall i \in [n] \}$ due to Assumption \ref{ass:bounds}, then it is also known that $\mathcal{G}^0 := \{ x \in \mathcal{A} : \Theta x \in \mathcal{E}, \forall \Theta \in \mathcal{C}^0 \}$ is a subset of $\mathcal{X}$.
Therefore, we ensure that the agent can initially choose safe actions by assuming that the interior of $\mathcal{G}^0$ is nonempty.
\begin{assumption}
\label{ass:init_set}
    The initial feasible set $\mathcal{G}^0$ has a nonempty interior.
\end{assumption}
We provide an algorithm for the stated problem in the next section.

\section{Proposed Algorithm}

We propose an algorithm to address the stated problem that operates by first performing pure exploration for an appropriate duration $T'$, as specified in the analysis, and then performing exploration-exploitation for the remaining rounds.
\IncMargin{1em}
\begin{algorithm2e}[t]
\caption{}
\label{alg:main_alg}
\DontPrintSemicolon
\LinesNumbered
\KwIn{$\mathcal{A},\mathcal{E},f,L, S$}
\tcp{Pure Exploration}
\For{$t=1$ \KwTo $T'$}{
    Choose $x_t$ by randomly sampling $\mathcal{G}^0$, and observe response $y_t$.\;
}
Construct $\mathcal{C}_{T'}$ and $\mathcal{G}_{T'}$ with \eqref{eqn:conf_set} and \eqref{eqn:safe_act} respectively.\;
\tcp{Exploration-Exploitation}
\For{$t=T'+1$ \KwTo $T$}{
    Choose some $(x_t, \tilde{\Theta}_t) \in \argmax_{(x,\Theta) \in \mathcal{G}_{t-1} \times \mathcal{C}_{t-1}} f(\Theta x)$, and observe response $y_t$.\;
    Update $\mathcal{C}_{t}$ and $\mathcal{G}_{t}$ with \eqref{eqn:conf_set} and \eqref{eqn:safe_act} respectively.\;
}
\end{algorithm2e}
\DecMargin{1em}
The algorithm is given in Algorithm \ref{alg:main_alg}.

The pure exploration phase proceeds by randomly sampling actions from $\mathcal{G}^0$ such that $\lambda_- := \lambda_{\mathrm{min}} \left( \mathbb{E} \left[ x_t x_t^\top \right] \right) > 0$ for $t \in [T']$.
Such a scheme is possible given that $\mathcal{G}^0$ has a nonempty interior, although we leave the specific choice of sampling scheme as a design decision.\footnote{We give an example of a sampling scheme with $\lambda_- > 0$. Since $\mathcal{G}^0$ has a nonempty interior, there exists $v \in \mathbb{R}^d$ and $r > 0$ such that the open ball $v+\mathcal{B}_{2}(r)$ is a subset of $\mathcal{G}^0$. It follows that the closed ball $v+\bar{\mathcal{B}}_{2}(r/2)$ is a subset of $\mathcal{G}^0$. Therefore, one possible sampling scheme is to uniformly sample $u_t$ from the unit sphere i.i.d. such that $\mathbb{E} [ u_t u_t^\top ] = \frac{1}{d} I$, and then play $x_t = v+\frac{r}{2} u_t$. Therefore, $\mathbb{E} [ x_t x_t^\top ] = \mathbb{E} [ v v^\top ] + \frac{r}{2} \mathbb{E} [ v u_t^\top + u_t v^\top ] + \frac{r^2}{4} \mathbb{E} [ u_t u_t^\top ] 
 = v v^\top + \frac{r^2}{4 d} I$ given that $v$ is fixed, and it follows that $\lambda_- = \frac{r^2}{4 d} > 0$.}

Each round in the exploration-exploitation phase, $t \in (T',T]$, consists of first identifying the set of actions which will ensure safety given the current knowledge, and then choosing the optimistic action within this safe action set.
In order to both identify which actions are safe and to choose actions optimistically, we use confidence sets in which the parameter $\Theta_*$ lies with high probability.
Let $\theta^i_*$ be the $i$th row of $\Theta_*$, such that $\Theta_* = [\theta^1_*\ \theta^2_*\ ...\ \theta^n_*]^\top$, and let $y_t^i$ be the $i$th element of $y_t$, such that $y_t = [y_t^1\ y_t^2\ ...\ y_t^n]^{\top}$.
Then the regularized least-squares estimator of each $\theta^i_*$ is given by $\hat{\theta}^i_t = [V_t]^{-1} \sum_{s=1}^t x_s y_s^i$ at round $t$, where the gram matrix is $V_t = \nu I + \sum_{s=1}^t x_s \left[x_s\right]^\top$.
Using the regularized least-squares estimator for each row of $\Theta_*$, we define the confidence set in the following theorem from \cite{abbasi2011improved}.
\begin{theorem}
\label{thm:conf_set}
(Theorem 2 in \cite{abbasi2011improved}) Let Assumptions \ref{ass:bounds} and \ref{ass:noise} hold. Then if $x_t$ is in $\mathcal{A}$ for all $t$, we have with probability at least $1 - \delta$ that $\Theta_*$ lies in the set
\begin{equation}
\label{eqn:conf_set}
\mathcal{C}_t = \left\{ [\theta^1\ \theta^2\ ...\ \theta^n]^\top \in \mathbb{R}^{n \times d} : \left\| \theta^i - \hat{\theta}_t^i \right\|_{V_t} \leq \sqrt{\beta_t}, \forall i \in [n] \right\}
\end{equation}
for all $t \geq 0$, where
\begin{equation*}
    \sqrt{\beta_t} = R \sqrt{d \log \Big( \frac{1 + t L^2 / \nu}{\delta/n} \Big) } + \sqrt{\nu} S.
\end{equation*}
\end{theorem}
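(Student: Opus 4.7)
The plan is to reduce the matrix-valued confidence set claim to $n$ independent applications of the vector-valued self-normalized bound of Theorem 2 in \cite{abbasi2011improved}, then glue them with a union bound. The key observation is that the problem decouples across the rows of $\Theta_*$: for each index $i \in [n]$, the scalar observation sequence $y_t^i = \langle x_t, \theta_*^i \rangle + \epsilon_t^i$ is exactly of the form handled by the standard stochastic linear bandit analysis, with unknown parameter $\theta_*^i \in \mathbb{R}^d$, actions $x_t \in \mathcal{A}$, and conditionally $R$-subgaussian noise $\epsilon_t^i$ (the latter being provided by Assumption \ref{ass:noise}).

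First, I would verify that the hypotheses of the cited theorem hold for each row. Assumption \ref{ass:bounds} furnishes $\|x_t\|_2 \leq L$ and $\|\theta_*^i\|_2 \leq S$, and Assumption \ref{ass:noise} gives the required $R$-subgaussianity of $\epsilon_t^i$ with respect to the natural filtration. Moreover, the regularized least-squares estimator used in the statement, $\hat{\theta}_t^i = V_t^{-1} \sum_{s=1}^t x_s y_s^i$ with $V_t = \nu I + \sum_{s=1}^t x_s x_s^{\top}$, is precisely the single-output ridge estimator analyzed in \cite{abbasi2011improved}, with the same regularizer $\nu$ shared across rows (hence the same gram matrix $V_t$).

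Next, I would invoke Theorem 2 of \cite{abbasi2011improved} for each fixed $i \in [n]$ at confidence level $\delta/n$, obtaining that the event
\begin{equation*}
\mathcal{A}_i := \left\{ \left\| \hat{\theta}_t^i - \theta_*^i \right\|_{V_t} \leq R \sqrt{d \log\!\Big( \tfrac{1 + t L^2/\nu}{\delta/n} \Big)} + \sqrt{\nu}\, S, \ \forall t \geq 0 \right\}
\end{equation*}
has probability at least $1 - \delta/n$. A union bound over $i \in [n]$ then yields $\Pr\big(\bigcap_{i=1}^n \mathcal{A}_i\big) \geq 1 - \delta$, which is exactly the assertion that $\Theta_*$ lies in $\mathcal{C}_t$ for all $t \geq 0$ with probability at least $1-\delta$.

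There is no substantive obstacle here since the result is a direct corollary of the cited theorem; the only point requiring care is that the decoupling across rows is clean, i.e., that a single shared gram matrix $V_t$ may be used and that the noise components $\epsilon_t^i$ individually satisfy the subgaussian martingale condition used in the vector proof (which is exactly what Assumption \ref{ass:noise} states). The mild cost of the union bound appears only as the $n$ in the logarithm of $\sqrt{\beta_t}$.
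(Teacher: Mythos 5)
Your proposal is correct and matches what the paper intends: the paper gives no explicit proof (it simply cites Theorem 2 of \cite{abbasi2011improved}), and the $\delta/n$ appearing inside the logarithm of $\sqrt{\beta_t}$ is precisely the signature of applying the scalar self-normalized bound row-by-row at level $\delta/n$ and taking a union bound over $i \in [n]$, exactly as you describe.
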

Using this set, we define a \emph{conservative inner approximation} of the feasible action set ($\mathcal{X}$) as
\begin{equation}
\label{eqn:safe_act}
    \mathcal{G}_t := \{ x \in \mathcal{A} : \Theta x \in \mathcal{E}, \forall \Theta \in \mathcal{C}_t \}.
\end{equation}
Note that the sets $\mathcal{C}_t$ and $\mathcal{G}_t$ are updated at the end of each round such that the agent has access to $\mathcal{C}_{t-1}$ and $\mathcal{G}_{t-1}$ in round $t$.
From the definition of $\mathcal{G}_t$, we can see that for any  $\Theta \in \mathcal{C}_{t}$ and $x \in \mathcal{G}_{t}$, it is guaranteed that $\Theta x$ is in the safety set $\mathcal{E}$.
Theorem \ref{thm:conf_set} states that $\Theta_*$ is in $\mathcal{C}_{t}$ for all rounds with high probability, so if the algorithm chooses $x_t$ from $\mathcal{G}_{t-1}$, the responses from the environment $\{ \Theta_* x_t \}_{\forall t \in (T',T]}$ will all be in $\mathcal{E}$ with high probability, and as such, they would ensure safety.

In order to choose these actions from the conservative action sets $\{ \mathcal{G}_{t-1} \}_{\forall t \in (T',T]}$ such that the regret is favorable, the algorithm behaves \emph{optimistically}.
That is, the algorithm chooses the best action in $\mathcal{G}_{t-1}$ assuming that the true parameter $\Theta_*$ is as favorable as possible given available information.
Since the agent knows that $\Theta_*$ is highly likely to be in the confidence set $\mathcal{C}_{t-1}$ in round $t$, the algorithm behaves optimistically by finding an action in $\mathcal{G}_{t-1}$ and parameter in $\mathcal{C}_{t-1}$ that maximize the possible reward.
Accordingly, the algorithm chooses the action as
\begin{equation}
\label{eqn:optim_update}
    (x_t, \tilde{\Theta}_t) \in \argmax_{(x,\Theta) \in \mathcal{G}_{t-1} \times \mathcal{C}_{t-1}} f(\Theta x),
\end{equation}
for every round $t \in (T',T]$.

It is important to recognize that, because $\mathcal{G}_{t-1}$ is a conservative inner approximation of $\mathcal{X}$, the optimal action $x_*$ may not be in $\mathcal{G}_{t-1}$.
Hence, how well $\mathcal{G}_{t-1}$ approximates $\mathcal{X}$ has an impact on how far $x_t$ is from the optimal action $x_*$, and hence how large the gap is between $f(\Theta_* x_*)$ and $f(\Theta_* x_t)$ (given the Lipschitz assumption on $f$).
The tightness with which $\mathcal{G}_t$ approximates $\mathcal{X}$ is evidently impacted by the size of $\mathcal{C}_t$, but as we show in the following section, the geometric properties of the safety constraint $\mathcal{E}$ and the action set $\mathcal{A}$ play a significant role as well.
 
\section{Regret Analysis}

In this section, we prove an upper bound on the cumulative regret of Algorithm \ref{alg:main_alg}.
A key aspect of the problem that impacts the regret is the geometric properties of the safety set and the action set.
As of now, we have not made any assumptions on these set.
However, we will show that the geometric properties of these sets will determine whether we can prove that the algorithm has sublinear regret.
To aid in this analysis, we introduce a geometric property of sets that we refer to as \emph{sharpness}, which plays a key role in the regret bound of the proposed algorithm.

\subsection{Geometric Properties of Safety Sets}
\label{sec:geom_props}

When the agent chooses an action, there is uncertainty as to what the response will be, necessitating the use of a conservative inner approximation of the set of safe actions.
Choosing actions from this inner approximation maintains safety because it ensures that every reasonably possible response to the chosen action (i.e. every $\Theta x_t$ for $\Theta \in \mathcal{C}_{t-1}$) satisfies the safety constraint.
In essence, this ensures that some region around the expected response lies within the safety constraint.
One can imagine that this region will not ``fit" well in to any ``sharp" corners that the safety set may have, and hence the inner approximation will be looser for safety sets with ``sharp" corners, resulting in less favorable regret.
We formalize this notion of \emph{sharpness} in the following series of definitions.
The proofs from this section are given in Appendix \ref{apx:geom_proofs}.

In order to study the impact that a safety set's geometry has on the tightness of the conservative inner approximation, we first present a more general type of inner approximation that we call the \emph{shrunk version of a set}.
Similar to how the conservative inner approximation ensures that the set of all reasonably possible responses are within the safety constraint, the shrunk version of a set ensures that a closed ball at each point is within the original set.
This is formally defined in the following definition, which uses the closed ball, $\bar{\mathcal{B}}_{\| \cdot \|} (r) := \{ x \in \mathbb{R}^m : \| x \| \leq r \}$, where $r$ is the radius and the particular norm is $\| \cdot \|$.
\begin{definition}
\label{def:shrunk_set}
    For a compact set $\mathcal{D} \subset \mathbb{R}^m$, a norm $\| \cdot \|$, and a nonnegative scalar $\Delta$, we define the \emph{shrunk version} of $\mathcal{D}$ as $\mathcal{D}_{\Delta}^{\| \cdot \|} := \{ x \in \mathcal{D} : x + v \in \mathcal{D}, \forall v \in \bar{\mathcal{B}}_{\| \cdot \|} (\Delta) \}$.\footnote{We can equivalently define $\mathcal{D}_{\Delta}^{\| \cdot \|}$ using Minkowski subtraction.
    The Minkowski subtraction of sets $A,B \subseteq \mathbb{R}^m$ is defined as $A \ominus B := \{a - b : a \in A, b \in B \}$, or equivalently, $A \ominus B = \{ x \in \mathbb{R}^m : x + B \subseteq A \}$ (\cite{schneider2014convex}).
    Therefore, we can write that $\mathcal{D}_{\Delta}^{\| \cdot \|} = \mathcal{D} \ominus \mathcal{B}_{\| \cdot \|} (\Delta)$ for $\Delta \geq 0$.}
\end{definition}
Given the above definition of the shrunk version of a set, one can consider the maximum shrinkage that a set can withstand while still being nonempty.
We introduce the \emph{maximum shrinkage of a set} in the following definition.
\begin{definition}
\label{def:delt_max}
    For a compact set $\mathcal{D} \subset \mathbb{R}^m$ and a norm $\| \cdot \|$, we define the \emph{maximum shrinkage} of $\mathcal{D}$, as $H_{\mathcal{D}}^{\| \cdot \|} := \sup\{ \Delta : \mathcal{D}_{\Delta}^{\| \cdot \|} \neq \emptyset \}$.
\end{definition}
We can now formally define the \emph{sharpness of a set} as the maximum  distance from any point in a set to the nearest point in the shrunk version of that set.
\begin{definition}
For a compact set $\mathcal{D} \subset \mathbb{R}^m$ and norm $\| \cdot \|$, we define the \emph{sharpness} of $\mathcal{D}$ as 
\begin{equation*}
    \mathrm{Sharp}_{\mathcal{D}}^{\| \cdot \|} (\Delta) := \sup_{x \in \mathcal{D}} \inf_{y \in \mathcal{D}_{\Delta}^{\| \cdot \|}}  \left \| y - x \right \|_2,
\end{equation*}
for all non-negative $\Delta$ such that $\mathcal{D}_{\Delta}^{\| \cdot \|}$ is nonempty.\footnote{Sharpness can also be define with the Hausdorff metric between sets (see \cite{schneider2014convex} Sec. 1.8) such that $\mathrm{Sharp}_{\mathcal{D}}^{\| \cdot \|} (\Delta) = d_H (\mathcal{D}, \mathcal{D}_{\Delta}^{\| \cdot \|})$.}
\end{definition}
Sharpness is applicable to the analysis of safe learning algorithms because it upper bounds how far an optimal point within the safe set (e.g. some set $\mathcal{D}$) is from a conservative inner approximation of that safe set (e.g. $\mathcal{D}_{\Delta}^{\| \cdot \|}$ or a superset of $\mathcal{D}_{\Delta}^{\| \cdot \|}$).
\begin{wrapfigure}{R}{0.45\textwidth}
        \centering
    \includegraphics[width=0.45\textwidth]{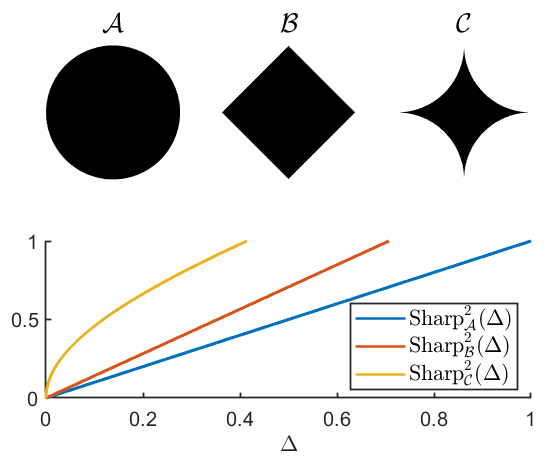}
    \caption{The 2-norm sharpness of three different sets in $\mathbb{R}^2$.}
    \label{fig:sharp_examps}
\end{wrapfigure}
To demonstrate how the geometry of a set impacts its sharpness, the sharpness of several different sets in $\mathbb{R}^2$ is plotted in Figure \ref{fig:sharp_examps}.
One can see that sets with ``sharper" corners have greater sharpness for the same value of $\Delta$.
Also note that we use $\mathcal{D}_{\Delta}^{p}$, $H_{\mathcal{D}}^p$ and $\mathrm{Sharp}_{\mathcal{D}}^{p} (\Delta)$ to refer to the shrunk set, maximum shrinkage and sharpness of some set $\mathcal{D}$ with respect to the $p$-norm.

We now show some simple properties related to when the shrunk version of a set is nonempty and therefore when the sharpness is defined.
First, we have that the shrunk version of a compact set is nonempty for some positive shrinkage precisely when the set has a nonempty interior.
\begin{proposition}
\label{prop:no_empty}
    For a compact set $\mathcal{D} \subset \mathbb{R}^m$, there exists a $\Delta > 0$ such that $\mathcal{D}_{\Delta}^{\| \cdot \|}$ is nonempty if and only if $\mathcal{D}$ has a nonempty interior.
\end{proposition}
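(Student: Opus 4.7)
The plan is to prove both directions directly from the definition of $\mathcal{D}_{\Delta}^{\| \cdot \|}$, using the elementary observation that in finite dimensions the open ball $\mathcal{B}_{\| \cdot \|}(r)$ with $r > 0$ is a nonempty open set in the standard topology of $\mathbb{R}^m$ (since any two norms on $\mathbb{R}^m$ are equivalent).

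For the forward direction, I would suppose that there is some $\Delta > 0$ with $\mathcal{D}_{\Delta}^{\| \cdot \|} \neq \emptyset$ and pick any $x \in \mathcal{D}_{\Delta}^{\| \cdot \|}$. By the very definition of the shrunk set, $x + v \in \mathcal{D}$ for all $v \in \bar{\mathcal{B}}_{\| \cdot \|}(\Delta)$, i.e., $x + \bar{\mathcal{B}}_{\| \cdot \|}(\Delta) \subseteq \mathcal{D}$. Since $\Delta > 0$, the translate $x + \mathcal{B}_{\| \cdot \|}(\Delta)$ is a nonempty open neighborhood of $x$ contained in $\mathcal{D}$, so $\mathcal{D}$ has nonempty interior.

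For the reverse direction, I would assume $\mathcal{D}$ has nonempty interior and pick any $x$ in the interior. Then there exists $r > 0$ such that the open ball $x + \mathcal{B}_{\| \cdot \|}(r) \subseteq \mathcal{D}$, and consequently the closed ball $x + \bar{\mathcal{B}}_{\| \cdot \|}(r/2) \subseteq \mathcal{D}$. Setting $\Delta := r/2 > 0$, the condition $x + v \in \mathcal{D}$ holds for every $v \in \bar{\mathcal{B}}_{\| \cdot \|}(\Delta)$, so $x \in \mathcal{D}_{\Delta}^{\| \cdot \|}$ and in particular $\mathcal{D}_{\Delta}^{\| \cdot \|} \neq \emptyset$.

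There is no real obstacle here — the proof is essentially a direct unwinding of Definition \ref{def:shrunk_set}. The only point that requires mild care is the standard fact that ``interior'' in $\mathbb{R}^m$ can be characterized via balls in the chosen norm $\| \cdot \|$ (which holds by equivalence of norms in finite dimensions); I would mention this briefly but not belabor it. Compactness of $\mathcal{D}$ is not actually needed for this proposition and will not play a role in the argument.
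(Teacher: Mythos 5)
Your proof is correct and takes essentially the same route as the paper's: both directions are a direct unwinding of Definition \ref{def:shrunk_set} combined with the equivalence of norms on $\mathbb{R}^m$. The only cosmetic difference is that you halve the radius to pass from an open to a closed ball, whereas the paper invokes closedness of $\mathcal{D}$ to keep the same radius; your remark that compactness is not actually needed is accurate.
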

Next, we show that the shrunk version of a compact set with nonempty interior is nonempty for all shrinkage less than or equal to the maximum shrinkage of the set.
This indicates that sharpness is defined on the closed interval from zero to the maximum shrinkage.
\begin{proposition}
\label{prop:range}
    For a compact set $\mathcal{D} \subset \mathbb{R}^m$ with nonempty interior, we have that $\mathcal{D}_{\Delta}^{\| \cdot \|}$ is nonempty for all $\Delta \in [0,H_{\mathcal{D}}^{\| \cdot \|}]$.
\end{proposition}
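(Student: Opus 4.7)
The plan is to reduce the claim to a single attainment statement and then pin that down with a compactness argument. First, I would note that the shrunk sets are monotone: whenever $0 \le \Delta_1 \le \Delta_2$, any $x$ satisfying $x+\bar{\mathcal{B}}_{\|\cdot\|}(\Delta_2)\subseteq \mathcal{D}$ automatically satisfies $x+\bar{\mathcal{B}}_{\|\cdot\|}(\Delta_1)\subseteq \mathcal{D}$, giving $\mathcal{D}_{\Delta_2}^{\|\cdot\|}\subseteq \mathcal{D}_{\Delta_1}^{\|\cdot\|}$. So once I show that the endpoint case $\Delta = H := H_{\mathcal{D}}^{\|\cdot\|}$ produces a nonempty shrunk set, the whole interval follows. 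Note also that by Proposition~\ref{prop:no_empty}, the assumption of nonempty interior guarantees $H > 0$, and compactness of $\mathcal{D}$ gives $H < \infty$, so the endpoint is a finite positive number.

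Next I would show that the supremum in the definition of $H$ is attained. Choose an increasing sequence $\Delta_k \uparrow H$ with $\mathcal{D}_{\Delta_k}^{\|\cdot\|} \neq \emptyset$ and pick representatives $x_k \in \mathcal{D}_{\Delta_k}^{\|\cdot\|}\subseteq\mathcal{D}$. Since $\mathcal{D}$ is compact, pass to a subsequence (still indexed by $k$) with $x_k \to x^\star \in \mathcal{D}$. The goal is to show $x^\star \in \mathcal{D}_{H}^{\|\cdot\|}$, i.e. $x^\star + v \in \mathcal{D}$ for every $v$ with $\|v\| \le H$. Given such a $v$, set $v_k := (\Delta_k/H)\,v$, so that $\|v_k\| \le \Delta_k$ and consequently $x_k + v_k \in \mathcal{D}$. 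Since $v_k \to v$ and $x_k \to x^\star$, we have $x_k + v_k \to x^\star + v$, and closedness of $\mathcal{D}$ (compact $\Rightarrow$ closed) yields $x^\star + v \in \mathcal{D}$.

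Combining the two steps, $\mathcal{D}_{H}^{\|\cdot\|}$ is nonempty, and by the monotonicity observation $\mathcal{D}_{\Delta}^{\|\cdot\|}\supseteq \mathcal{D}_{H}^{\|\cdot\|}$ is nonempty for every $\Delta \in [0,H]$, which is exactly the claim. The only subtle point is the scaling trick $v_k = (\Delta_k/H)v$ used to approximate an arbitrary $v \in \bar{\mathcal{B}}_{\|\cdot\|}(H)$ by vectors in the slightly smaller balls $\bar{\mathcal{B}}_{\|\cdot\|}(\Delta_k)$; without rescaling, one cannot directly feed $v$ into the defining inclusion at level $\Delta_k < H$. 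Everything else is routine compactness and monotonicity, so I do not anticipate a serious obstacle.
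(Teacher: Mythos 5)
Your proof is correct, but it takes a genuinely different route from the paper's in the key step of showing that the supremum $H := H_{\mathcal{D}}^{\|\cdot\|}$ is itself attained (i.e.\ that $\mathcal{D}_{H}^{\|\cdot\|} \neq \emptyset$). The paper fixes, for each $\epsilon>0$, some $\tilde{\Delta}_\epsilon \in E$ with $\tilde{\Delta}_\epsilon > H-\epsilon$ and a witness $x$ with $x+\bar{\mathcal{B}}_{\|\cdot\|}(\tilde{\Delta}_\epsilon)\subseteq\mathcal{D}$, and then runs a contradiction argument to upgrade the family of inclusions $x+\bar{\mathcal{B}}_{\|\cdot\|}(H-\epsilon)\subseteq\mathcal{D}$ to $x+\bar{\mathcal{B}}_{\|\cdot\|}(H)\subseteq\mathcal{D}$ using closedness of $\mathcal{D}$. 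That argument implicitly treats the witness $x$ as independent of $\epsilon$, which is not justified a priori since a different $\epsilon$ could require a different witness. You instead extract a single limiting witness $x^\star$ by sequential compactness of $\mathcal{D}$ and transfer the ball inclusion to the limit via the rescaling $v_k=(\Delta_k/H)v$ together with closedness of $\mathcal{D}$; this handles the $\epsilon$-dependence of the witness explicitly and is the cleaner way to make the attainment step rigorous. The remaining ingredients --- nonemptiness of $E$ via Proposition~\ref{prop:no_empty}, boundedness of $E$ from boundedness of $\mathcal{D}$, and monotonicity of $\Delta\mapsto\mathcal{D}_{\Delta}^{\|\cdot\|}$ to cover the whole interval $[0,H]$ --- coincide with the paper's. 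The only minor point worth making explicit is that a nondecreasing sequence $\Delta_k\uparrow H$ inside $E$ exists (take $\Delta_k=\max(\Delta_1',\dots,\Delta_k')$ for any witnesses $\Delta_j'\in E$ with $\Delta_j'>H-1/j$); this is routine and does not affect the validity of your argument.
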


For the remainder of this section, we will study the sharpness of different types of compact sets with nonempty interiors.
The first type of set that we study is the polytope, 
which is the convex hull of a finite set of points, or equivalently, the bounded intersection of a finite number of closed half-spaces.
Polytopes capture a wide variety of constraints in the real world and are frequently used for safety sets in safe learning  (e.g., \cite{chaudhary2022safe,fereydounian2020safe,usmanova2019safe}).
We use the polyhedron representation of a polytope,  $\mathcal{D} = \{ x \in \mathbb{R}^m : Ax \leq b\}$ with $A \in \mathbb{R}^{p \times m}$ and $b \in \mathbb{R}^p$, where there are no redundant constraints.
We define $a_j \in \mathbb{R}^m$ as the $j$th row of $A$ such that $A = [a_1\ a_2\ ...\ a_p]^\top$ and $b_j \in \mathbb{R}$ as the $j$th element of $b$ such that $b = [b_1\ b_2\ ...\ b_p]^\top$.
We also use $\mathcal{I}_A$ to refer to the collection of all sets of $m$ indices such that for each $\{i_1, i_2, ..., i_m\} \in \mathcal{I}_A$ the vectors $a_{i_1}, a_{i_2}, ..., a_{i_m}$ are linearly independent.
For each $\ell \in \mathcal{I}_A$ where $\ell = \{i_1, i_2, ..., i_m\}$, we write $A^\ell = [a_{i_1}\  a_{i_2}\ ...\ a_{i_m}]^\top$ and denote its condition number by $\kappa(A^\ell)$.
Using this notation, the following proposition shows that the sharpness of a polytope is bounded by a function that is linear in shrinkage. 
\begin{proposition}
\label{prop:sharp_polyt}
    For a polytope $\mathcal{D} = \{ x \in \mathbb{R}^m : Ax \leq b\}$ with nonempty interior, we have that $\mathrm{Sharp}_{\mathcal{D}}^{\| \cdot \|} (\Delta) \leq \sqrt{m} C_{\| \cdot \|} K_{\mathcal{D}} \Delta$, where $K_{\mathcal{D}} := \max_{\ell \in \mathcal{I}_A} \kappa(A^\ell)$ and $C_{\| \cdot \|} := \max_{\| y \| = 1} \| y \|_2$.
\end{proposition}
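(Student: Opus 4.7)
The plan is to bound $d(x, \mathcal{D}_{\Delta}^{\|\cdot\|})$ for an arbitrary $x \in \mathcal{D}$ by analyzing its Euclidean projection onto the shrunk polytope. First I would rewrite $\mathcal{D}_{\Delta}^{\|\cdot\|}$ in half-space form as $\{y : a_j^\top y \leq b_j - \Delta \|a_j\|_{\star},\ \forall j\}$, which follows from $\sup_{\|v\| \leq \Delta} a_j^\top v = \Delta \|a_j\|_{\star}$ (the dual norm). Let $y$ denote the Euclidean projection of $x$ onto $\mathcal{D}_{\Delta}^{\|\cdot\|}$; the case $x = y$ is trivial, so assume $x \neq y$.

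Next I would apply the KKT conditions for projection to write $x - y = \sum_j \nu_j a_j$ with $\nu_j \geq 0$ and $\nu_j > 0$ only when $a_j^\top y = b_j - \Delta \|a_j\|_{\star}$. Carathéodory's theorem for conic combinations in $\mathbb{R}^m$ lets me assume the support $J$ of $\nu$ satisfies $|J| \leq m$ with $\{a_j\}_{j \in J}$ linearly independent. The core computation is then
\begin{equation*}
\|x - y\|_2^2 = \sum_{j \in J} \nu_j (a_j^\top x - a_j^\top y) \leq \Delta \sum_{j \in J} \nu_j \|a_j\|_{\star} \leq \Delta \|\nu\|_2 \|u^J\|_2,
\end{equation*}
where $u^J = (\|a_j\|_{\star})_{j \in J}$, using $a_j^\top x \leq b_j$ for the first inequality and Cauchy-Schwarz for the second. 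Since $x - y = (A^J)^\top \nu$ and $A^J$ has full row rank, $\|x - y\|_2 \geq \sigma_{\min}(A^J) \|\nu\|_2$, so dividing by $\|x-y\|_2$ yields $\|x - y\|_2 \leq \Delta \|u^J\|_2 / \sigma_{\min}(A^J)$. The elementary bounds $\|u^J\|_2 \leq C_{\|\cdot\|} \|A^J\|_F \leq \sqrt{m}\, C_{\|\cdot\|} \|A^J\|_2$ (the first from $\|a_j\|_{\star} \leq C_{\|\cdot\|} \|a_j\|_2$) then give $\|x - y\|_2 \leq \sqrt{m}\, C_{\|\cdot\|} \kappa(A^J) \Delta$.

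The main obstacle will be matching $\kappa(A^J)$ to $K_{\mathcal{D}}$, which is defined only over subsets of size exactly $m$. If $|J| = m$ then $J \in \mathcal{I}_A$ and we are done. When $|J| < m$, I would extend $J$ to some $J' \supseteq J$ with $|J'| = m$ and linearly independent rows of $A^{J'}$; this is possible because $\mathcal{D}$ being a bounded set with nonempty interior in $\mathbb{R}^m$ forces $A$ to have full column rank. By Weyl's interlacing inequality for singular values, appending rows to $A^J$ can only weakly increase $\sigma_{\max}$ and weakly decrease $\sigma_{\min}$, giving $\kappa(A^J) \leq \kappa(A^{J'}) \leq K_{\mathcal{D}}$. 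Taking the supremum over $x \in \mathcal{D}$ then yields the claimed inequality.
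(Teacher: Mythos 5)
Your proof is correct, and it reaches the bound by a genuinely different route than the paper. The paper first passes to a superset polytope $\tilde{\mathcal{D}}$ whose non-redundant constraints match those of $\mathcal{D}_{\Delta}^{\| \cdot \|}$, argues that the worst-case point is a vertex of $\tilde{\mathcal{D}}$ (convexity of the distance-to-a-convex-set function), and then explicitly computes the displacement between that vertex $u=(A^v)^{-1}b^v$ and the corresponding vertex $v=(A^v)^{-1}(b^v-\Delta\alpha_v)$ of the shrunk polytope; the final estimate $\Delta\,\|(A^v)^{-1}\alpha_v\|_2\le\sqrt{m}\,C_{\| \cdot \|}\,\kappa(A^v)\Delta$ is the same algebra you use. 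You instead take an arbitrary $x\in\mathcal{D}$, characterize its Euclidean projection $y$ onto the shrunk polytope via the KKT conditions, prune the active set to at most $m$ linearly independent normals by conic Carath\'eodory, and extract the bound from $\|x-y\|_2^2=\sum_{j\in J}\nu_j(a_j^\top x-a_j^\top y)\le\Delta\|\nu\|_2\|u^J\|_2$ together with $\|(A^J)^\top\nu\|_2\ge\sigma_{\min}(A^J)\|\nu\|_2$. What each approach buys: the paper's vertex pairing is geometrically transparent but leans on the claim that the worst-case vertex of $\tilde{\mathcal{D}}$ has a counterpart vertex of $\mathcal{D}_{\Delta}^{\| \cdot \|}$ with the same active set, a combinatorial matching that is a little delicate when constraints become redundant after shrinking; your projection argument needs no vertex enumeration or redundancy bookkeeping and handles degenerate active sets uniformly, at the cost of the extra step covering $|J|<m$, where your use of singular-value interlacing ($\sigma_1$ weakly increases and the smallest singular value weakly decreases when rows are appended, so $\kappa(A^J)\le\kappa(A^{J'})\le K_{\mathcal{D}}$) is correct and closes the gap. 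Both proofs need $A$ to have full column rank, which you correctly derive from boundedness of the polytope.
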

The sharpness bound in Proposition \ref{prop:sharp_polyt} is proportional to the constant $K_{\mathcal{D}}$, which is the maximum condition number of all sets of $m$ linearly independent constraints.
Since there are $m$ linearly independent constraints that are active at each vertex, $K_{\mathcal{D}}$ upper bounds the condition number of the active constraints at each vertex.
This is an intuitive measure of the sharpness of a polytope, given that the condition number of the constraints indicate how close to parallel they are.
Also, note that the term $C_{\| \cdot \|}$ in Proposition \ref{prop:sharp_polyt} may depend on the dimension.
For example, when the infinity-norm is used, $C_{\| \cdot \|_{\infty}} = \sqrt{m}$, and when the 1-norm is used, $C_{\| \cdot \|_1}~=~1$.

Using the sharpness bound that we developed for polytopes, we can study more general sets.
The key intuition that we use to study more general sets is that we can define subsets of the original set which, with appropriate construction, bound the original set in terms of sharpness.
In particular, we construct polytopic subsets of the original set in order to provide sharpness bounds that are linear with respect to shrinkage.
Being able establish linear bounds on the sharpness is important because it allows us to establish sublinear regret bounds on the proposed algorithm, which we discuss in the next section.
In order to develop a bound that uses polytopic subsets, we define the families of polytopes that can be used to bound the sharpness of a given set.
\begin{definition}
    For a point $x$ in the compact set $\mathcal{D} \subset \mathbb{R}^m$, we define $F_{\mathcal{D}}(x)$ as the family of polytopes with nonempty interior that contain $x$ and are subsets of $\mathcal{D}$.
\end{definition}
From this, we define the class of sets for which we can use polytopic subsets to bound the sharpness.
\begin{definition}
    A compact set $\mathcal{D} \subset \mathbb{R}^m$ is referred to as \emph{polytope-sharp} if $F_{\mathcal{D}}(x)$ is nonempty for all $x \in \mathcal{D}$.
\end{definition}
We can see that the class of polytope-sharp sets are those for which a collection of polytopes can be constructed to contain each point in the set while still being subsets of the original set.
\begin{figure}
    \centering
    \subfigure{
        \centering
        \includegraphics[width=5cm]{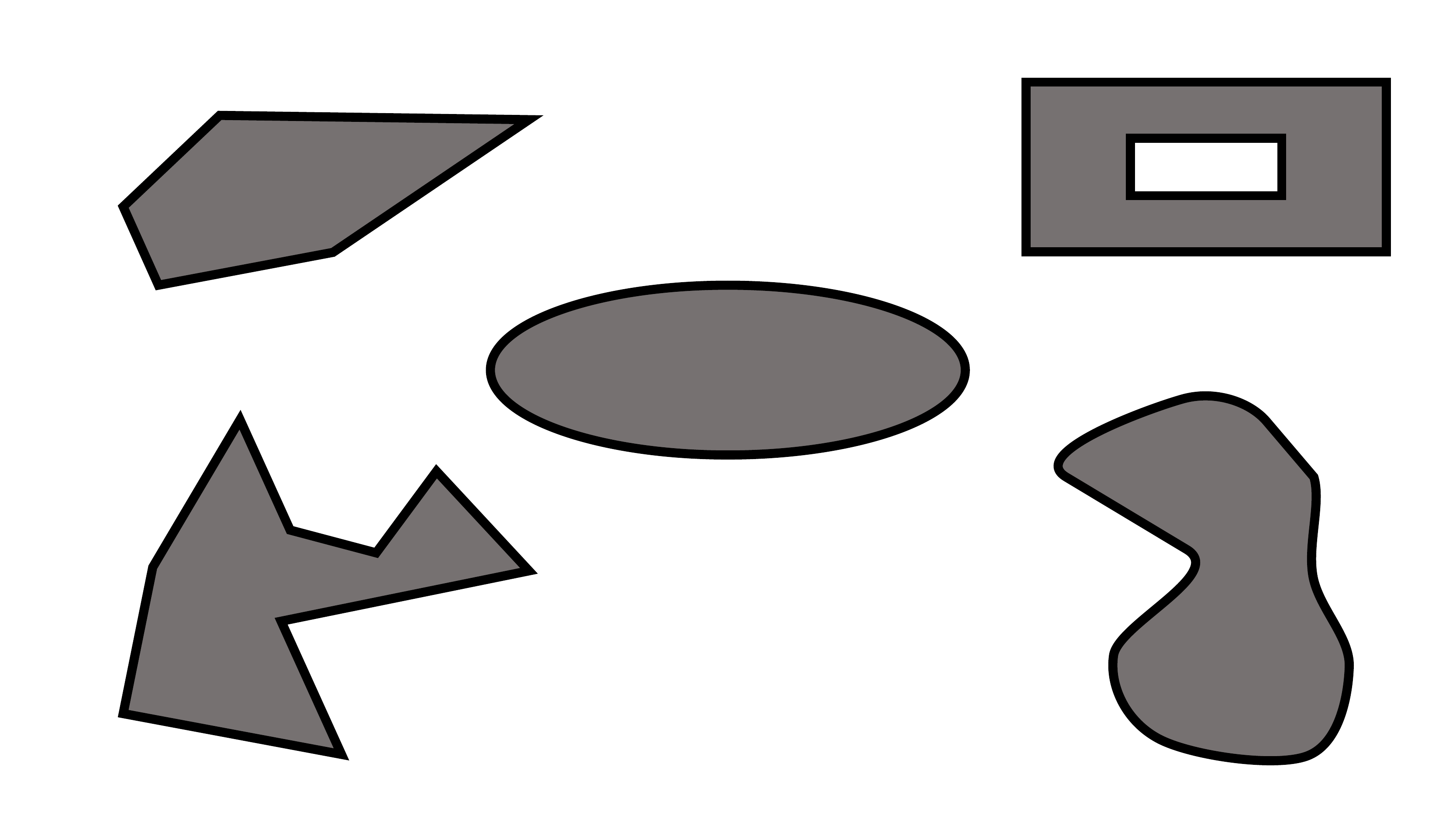}
        }
        \hspace{0.75in}
    \subfigure{
        \includegraphics[width=5cm]{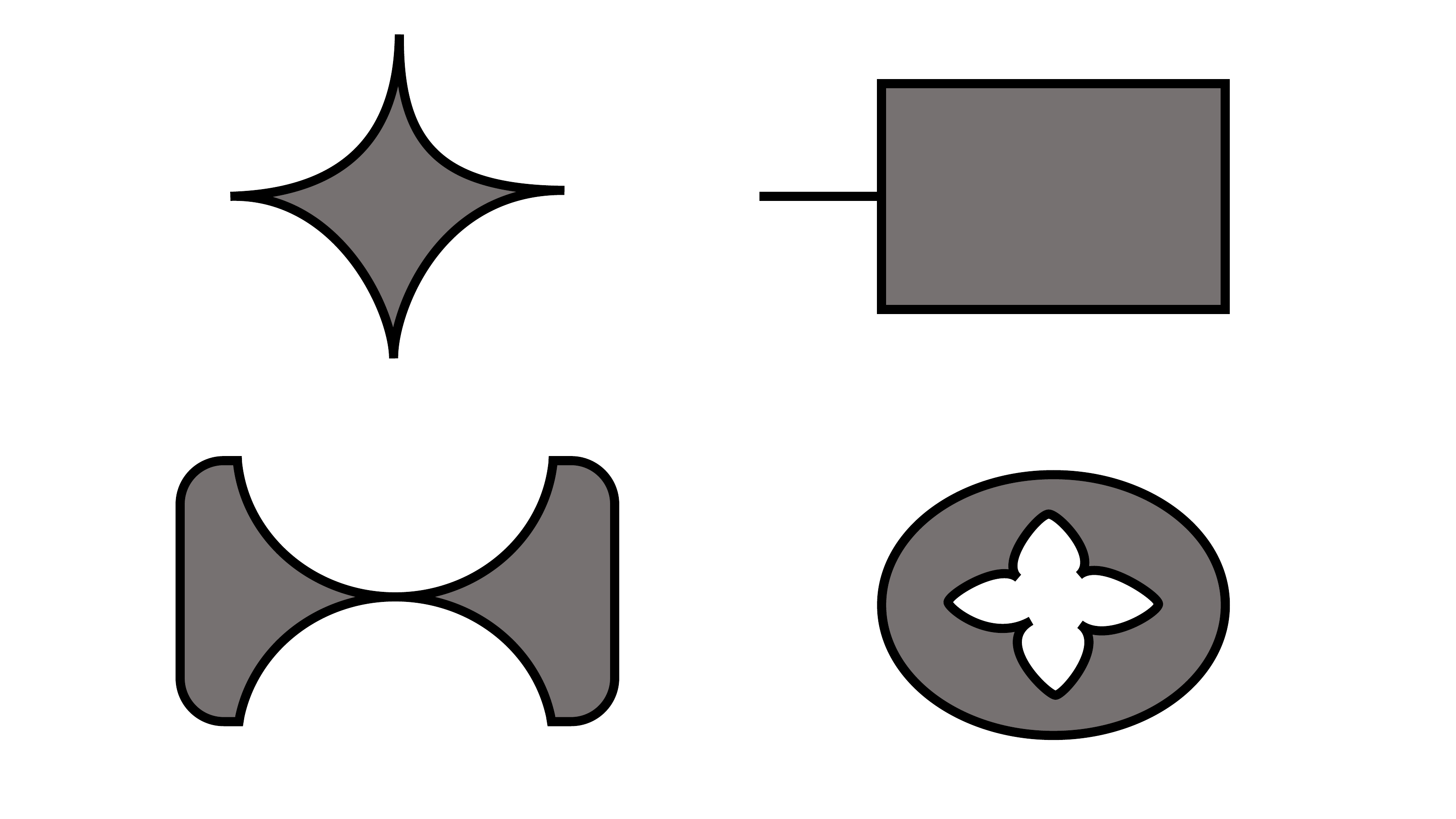}
        }
    \caption{Examples of sets in $\mathbb{R}^2$ which are \emph{polytope-sharp} and hence can be bounded linearly with respect to shrinkage via Proposition \ref{prop:linear_sets} (a), and sets that are not \emph{polytope-sharp} (b).}
    \label{fig:linear_sharp}
    \vspace{-0.2in}
\end{figure}
Examples of sets that meet this criterion and sets that do not meet this criterion are illustrated in Figure \ref{fig:linear_sharp}.
With this definition, we can then present the following proposition, which provides a linear sharpness bound on sets that are polytope-sharp.
\begin{proposition}
\label{prop:linear_sets}
    Let $\mathcal{D} \subset \mathbb{R}^m$ be a compact set that is polytope-sharp, and choose some arbitrary $\mathcal{F}_x \in F_{\mathcal{D}}(x)$ for each $x \in \mathcal{D}$.
    Then, we have that 
    \begin{equation*}
        \mathrm{Sharp}_{\mathcal{D}}^{\| \cdot \|} (\Delta) \leq \sqrt{m} \bar{C}_{\| \cdot \|} \Gamma_{\mathcal{D}} \Delta,
    \end{equation*}
    where, $\bar{C}_{\| \cdot \|} = \max( C_{\| \cdot \|},1)$ and
    \begin{equation*}
        \Gamma_{\mathcal{D}} := \max \left \{ \bar{K}_{\mathcal{F}}, \frac{r_{\mathcal{D}}}{\bar{H}_{\mathcal{F}}^{\| \cdot \|}} \right \},
    \end{equation*}
    with $\bar{K}_{\mathcal{F}} := \max_{x \in \mathcal{D}} K_{\mathcal{F}_x}$, $\bar{H}_{\mathcal{F}}^{\| \cdot \|} := \min_{x \in \mathcal{D}} H_{\mathcal{F}_x}^{\| \cdot \|}$ and $r_{\mathcal{D}} := \max_{x,y \in \mathcal{D}} \| x - y \|_2$.
\end{proposition}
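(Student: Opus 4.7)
The plan is to bound the sharpness of $\mathcal{D}$ by passing through the subordinate polytopes $\mathcal{F}_x$ and invoking \Cref{prop:sharp_polyt}. The core observation is the monotonicity of shrinking: whenever $\mathcal{F}_x \subseteq \mathcal{D}$, the definition of the shrunk set immediately gives $(\mathcal{F}_x)_{\Delta}^{\|\cdot\|} \subseteq \mathcal{D}_{\Delta}^{\|\cdot\|}$, because any $y \in \mathcal{F}_x$ with $y+v \in \mathcal{F}_x$ for all $v \in \bar{\mathcal{B}}_{\|\cdot\|}(\Delta)$ is in particular a point of $\mathcal{D}$ with $y+v \in \mathcal{D}$ for all such $v$. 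Consequently, for any $x \in \mathcal{D}$ and any $\Delta$ for which $(\mathcal{F}_x)_{\Delta}^{\|\cdot\|}$ is nonempty, since $x \in \mathcal{F}_x$,
\begin{equation*}
\inf_{y \in \mathcal{D}_{\Delta}^{\|\cdot\|}} \|y-x\|_2 \;\leq\; \inf_{y \in (\mathcal{F}_x)_{\Delta}^{\|\cdot\|}} \|y-x\|_2 \;\leq\; \mathrm{Sharp}_{\mathcal{F}_x}^{\|\cdot\|}(\Delta).
\end{equation*}
This reduces the problem to bounding the sharpness of each polytope $\mathcal{F}_x$, which is handled by \Cref{prop:sharp_polyt}.

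I would then split into two regimes for $\Delta$. In the first regime, $\Delta \leq \bar{H}_{\mathcal{F}}^{\|\cdot\|}$, \Cref{prop:range} applied to each $\mathcal{F}_x$ guarantees that $(\mathcal{F}_x)_{\Delta}^{\|\cdot\|}$ is nonempty uniformly in $x$, so the polytopic bound of \Cref{prop:sharp_polyt} applies and gives $\mathrm{Sharp}_{\mathcal{F}_x}^{\|\cdot\|}(\Delta) \leq \sqrt{m}\,C_{\|\cdot\|}\,K_{\mathcal{F}_x}\,\Delta \leq \sqrt{m}\,C_{\|\cdot\|}\,\bar{K}_{\mathcal{F}}\,\Delta$. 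In the complementary regime, $\Delta > \bar{H}_{\mathcal{F}}^{\|\cdot\|}$, the chain via $\mathcal{F}_x$ may break, but the trivial diameter bound takes over: as long as $\mathcal{D}_{\Delta}^{\|\cdot\|}$ is nonempty (which it must be for sharpness to be defined at $\Delta$), pick any $z \in \mathcal{D}_{\Delta}^{\|\cdot\|} \subseteq \mathcal{D}$ to get $\inf_{y \in \mathcal{D}_{\Delta}^{\|\cdot\|}} \|y-x\|_2 \leq \|z-x\|_2 \leq r_{\mathcal{D}}$, and then $r_{\mathcal{D}} \leq (r_{\mathcal{D}}/\bar{H}_{\mathcal{F}}^{\|\cdot\|})\,\Delta$.

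The two cases are then folded together by noting that $\sqrt{m}\,\bar{C}_{\|\cdot\|}\,\Gamma_{\mathcal{D}}\,\Delta$ dominates both: $\bar{C}_{\|\cdot\|} \geq C_{\|\cdot\|}$ and $\Gamma_{\mathcal{D}} \geq \bar{K}_{\mathcal{F}}$ handle the first case, while $\sqrt{m} \geq 1$, $\bar{C}_{\|\cdot\|} \geq 1$, and $\Gamma_{\mathcal{D}} \geq r_{\mathcal{D}}/\bar{H}_{\mathcal{F}}^{\|\cdot\|}$ handle the second. Taking the supremum over $x \in \mathcal{D}$ then yields the stated bound.

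The main obstacle I anticipate is bookkeeping rather than a deep step: verifying that the two $\Delta$-regimes together exhaust the entire domain $[0, H_{\mathcal{D}}^{\|\cdot\|}]$ on which $\mathrm{Sharp}_{\mathcal{D}}^{\|\cdot\|}(\Delta)$ is defined, and ensuring $\bar{H}_{\mathcal{F}}^{\|\cdot\|} > 0$ so that the ratio $r_{\mathcal{D}}/\bar{H}_{\mathcal{F}}^{\|\cdot\|}$ is well-defined (each $\mathcal{F}_x$ has nonempty interior by the definition of $F_{\mathcal{D}}(x)$, so $H_{\mathcal{F}_x}^{\|\cdot\|} > 0$ by \Cref{prop:no_empty}, and the infimum being a $\min$ in the proposition implicitly carries positivity). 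Compactness of $\mathcal{D}$ makes $r_{\mathcal{D}}$ finite, so the bound is meaningful throughout.
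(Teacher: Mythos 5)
Your proposal is correct and follows essentially the same route as the paper: the monotonicity of shrinking under set inclusion (the paper isolates this as a separate lemma), the reduction to $\mathrm{Sharp}_{\mathcal{F}_x}^{\| \cdot \|}(\Delta)$ via Proposition \ref{prop:sharp_polyt}, and the two-regime split at $\bar{H}_{\mathcal{F}}^{\| \cdot \|}$ with the diameter bound covering the remainder of the domain. No gaps.
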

In addition to providing a linear sharpness bound on polytope-sharp sets, Proposition \ref{prop:linear_sets} also indicates that, when the polytopes are small, i.e. $\bar{H}_{\mathcal{F}}^{\| \cdot \|}$ is small, or the polytopes are sharp, i.e. $\bar{K}_{\mathcal{F}}$ is large, then the sharpness bound is larger and therefore less favorable.
From Proposition \ref{prop:linear_sets}, we can also immediately show that every compact, convex set with nonempty interior is linearly sharp.
\begin{corollary}
\label{cor:sharp_conv}
    If a compact set $\mathcal{D} \subset \mathbb{R}^m$ with nonempty interior is convex, then it is polytope-sharp and it holds that $\mathrm{Sharp}_{\mathcal{D}}^{\| \cdot \|} (\Delta) \leq \sqrt{m} \bar{C}_{\| \cdot \|} \Gamma_{\mathcal{D}} \Delta$.
\end{corollary}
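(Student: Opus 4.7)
The plan is to reduce Corollary \ref{cor:sharp_conv} to Proposition \ref{prop:linear_sets} by verifying the sole nontrivial hypothesis, namely that every compact convex set with nonempty interior is polytope-sharp. Once that is established, the sharpness bound is an immediate consequence of the proposition.

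To show polytope-sharpness, I would fix a reference ``fat'' simplex inside $\mathcal{D}$ and then build a polytope containing any prescribed point by taking a convex hull with that simplex. In detail: since $\mathcal{D}$ has nonempty interior, there exist $x_0 \in \mathcal{D}$ and $r > 0$ with $\bar{\mathcal{B}}_{\|\cdot\|_2}(r) + x_0 \subset \mathcal{D}$. Choose any $m+1$ affinely independent points $v_0, v_1, \dots, v_m$ inside this closed ball, and let $S := \mathrm{conv}\{v_0, v_1, \dots, v_m\}$; this is an $m$-dimensional simplex, hence a polytope with nonempty interior, and $S \subset \mathcal{D}$.

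Now, for any $x \in \mathcal{D}$, define
\begin{equation*}
    \mathcal{F}_x := \mathrm{conv}\left(\{x\} \cup \{v_0, v_1, \dots, v_m\}\right).
\end{equation*}
As the convex hull of finitely many points, $\mathcal{F}_x$ is a polytope. It contains $x$ by construction and contains $S$, so it has nonempty interior. Finally, since $\mathcal{D}$ is convex and contains both $x$ and all $v_i$, it contains $\mathcal{F}_x$. Thus $\mathcal{F}_x \in F_{\mathcal{D}}(x)$, so $F_{\mathcal{D}}(x)$ is nonempty for every $x \in \mathcal{D}$, establishing that $\mathcal{D}$ is polytope-sharp. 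Applying Proposition \ref{prop:linear_sets} with this choice of $\mathcal{F}_x$ (or any other valid choice) delivers the stated bound.

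The main obstacle is really just noticing the right polytope construction: a naive attempt to take $\mathcal{F}_x = \{x\}$ or the convex hull of $x$ with a single interior point fails the nonempty-interior requirement, while attempting to use a ball or a smooth neighborhood of $x$ fails to be a polytope. Coupling $x$ with a fixed $m$-dimensional simplex inside the interior of $\mathcal{D}$ simultaneously secures the polytope property, full-dimensionality, and containment in $\mathcal{D}$ via convexity, making the reduction to Proposition \ref{prop:linear_sets} trivial.
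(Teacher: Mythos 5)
Your proof is correct and follows essentially the same route as the paper's: both fix a full-dimensional polytope inside $\mathcal{D}$ (the paper uses a hypercube, you use a simplex of $m+1$ affinely independent points), take $\mathcal{F}_x$ to be its convex hull with $x$, and invoke convexity of $\mathcal{D}$ for containment before applying Proposition \ref{prop:linear_sets}. Your version is, if anything, slightly more explicit about why the reference polytope has nonempty interior.
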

It is important to note that although all compact convex sets are polytope-sharp, there are also nonconvex sets that are polytope-sharp.

\subsection{Regret Bound}

We will now use the work in the previous section to establish a sublinear bound on the cumulative regret of Algorithm \ref{alg:main_alg}.
In order to do so, we define the set of feasible responses as $\mathcal{Y} := \Theta_* \mathcal{A} \cap \mathcal{E}$, where we use the notation $\Theta_* \mathcal{A} = \{ \Theta_* x : x \in \mathcal{A} \}$.
The set $\mathcal{Y}$ reflects the set of responses that are possible given the action set $\mathcal{A}$ and the safety set $\mathcal{E}$.
The sharpness of $\mathcal{Y}$ is used in the regret bound for Algorithm \ref{alg:main_alg}, as shown in Theorem \ref{thm:main}.
Although one might expect that the sharpness of $\mathcal{E}$ would be in the regret bound (instead of the sharpness of $\mathcal{Y}$), the set $\mathcal{A}$ can impact the distance from the optimal action $x^*$ to the set $\mathcal{G}_{t-1}$ and hence it is insufficient to solely use the sharpness of $\mathcal{E}$ in the regret analysis. 
Therefore, we use the sharpness of $\mathcal{Y}$ to capture both the sharpness of $\mathcal{E}$ and any unfavorable effects due to the specific $\mathcal{A}$ in a particular problem setting.
\begin{theorem}
\label{thm:main}
    Let Assumptions \labelcref{ass:bounds,ass:init_set,ass:lipsch,ass:noise} hold. 
    With probability at least $1 - 2 \delta$, we have that the regret of Algorithm \ref{alg:main_alg} is bounded as
    \begin{equation*}
        \begin{split}
            R_T \leq {}& 2 M \sqrt{n} LS T' + M (T - T') \mathrm{Sharp}_{\mathcal{Y}}^{\infty} \left( \frac{2  \sqrt{2 \beta_T} L }{\sqrt{2 \nu + \lambda_- T'}} \right) \\
            & + M \max (H_{\mathcal{Y}}^\infty,1) \sqrt{n 8 \beta_T (T - T') d \log \left( \frac{1 + T L^2}{d \nu} \right) }.
        \end{split}
    \end{equation*}
    for any $T' \geq \max( t_{\delta}, t_h)$ where $t_\delta := \frac{8 L^2}{\lambda_-} \log(\frac{d}{\delta})$ and $t_h := \frac{8 \beta_T L^2}{\lambda_- \left( H_{\mathcal{Y}}^\infty \right)^2} - \frac{2 \nu}{\lambda_-}$.
\end{theorem}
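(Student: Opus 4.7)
The plan is to split the regret into the pure exploration phase ($t \leq T'$) and the exploration-exploitation phase ($t > T'$), bounding each separately. For the pure exploration phase, Assumption~\ref{ass:lipsch} together with the row-wise bounds $\|\theta_*^i\|_2 \leq S$ and $\|x\|_2 \leq L$ immediately gives $\|\Theta_* x\|_2 \leq \sqrt{n}\, SL$, so the per-round regret is at most $2M\sqrt{n}\, SL$ and the total pure-exploration regret is at most $2M\sqrt{n}\, SL\, T'$, producing the first term of the bound.

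For the exploration-exploitation phase I work on the intersection of two high-probability events: (i) $\Theta_* \in \mathcal{C}_t$ for all $t$, given by Theorem~\ref{thm:conf_set}; and (ii) $\lambda_{\min}(V_{T'}) \geq \nu + \lambda_- T'/2$, obtained by a matrix Chernoff bound applied to the i.i.d.\ sum $\sum_{s \leq T'} x_s x_s^\top$ whose mean has minimum eigenvalue at least $\lambda_-$, which requires $T' \geq t_\delta$. A union bound then gives probability at least $1 - 2\delta$. Event (ii) yields the uniform bound $\|x\|_{V_{t-1}^{-1}}^2 \leq 2 L^2/(2\nu + \lambda_- T')$ for all $x \in \mathcal{A}$ and $t > T'$, which combined with event (i) and a row-wise Cauchy--Schwarz step gives, for every $\Theta \in \mathcal{C}_{t-1}$ and every $x \in \mathcal{A}$,
\[
\|(\Theta - \Theta_*) x\|_\infty \;\leq\; 2\sqrt{\beta_{t-1}}\,\|x\|_{V_{t-1}^{-1}} \;\leq\; \Delta, \qquad \Delta := \frac{2\sqrt{2\beta_T}\, L}{\sqrt{2\nu + \lambda_- T'}}.
\]

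With this setup I construct a safe proxy action $\hat{x}_t \in \mathcal{G}_{t-1}$ close in response space to $x_*$. The condition $T' \geq t_h$ ensures $\Delta \leq H_\mathcal{Y}^\infty$, so by Proposition~\ref{prop:range} the shrunk set $\mathcal{Y}_\Delta^\infty$ is nonempty. Pick $\hat{y}_t \in \mathcal{Y}_\Delta^\infty$ attaining (up to arbitrary tolerance) $\|y_* - \hat{y}_t\|_2 \leq \mathrm{Sharp}_\mathcal{Y}^\infty(\Delta)$ where $y_* := \Theta_* x_*$. Since $\hat{y}_t \in \mathcal{Y} \subseteq \Theta_* \mathcal{A}$, write $\hat{y}_t = \Theta_* \hat{x}_t$ for some $\hat{x}_t \in \mathcal{A}$; then for any $\Theta \in \mathcal{C}_{t-1}$ the $\infty$-norm bound above gives $\Theta \hat{x}_t \in \hat{y}_t + \bar{\mathcal{B}}_\infty(\Delta) \subseteq \mathcal{Y} \subseteq \mathcal{E}$, proving $\hat{x}_t \in \mathcal{G}_{t-1}$. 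Decompose each per-round regret as
\[
f(y_*) - f(\Theta_* x_t) \;=\; [f(y_*) - f(\hat{y}_t)] + [f(\Theta_* \hat{x}_t) - f(\tilde{\Theta}_t x_t)] + [f(\tilde{\Theta}_t x_t) - f(\Theta_* x_t)].
\]
The first bracket is at most $M\,\mathrm{Sharp}_\mathcal{Y}^\infty(\Delta)$ by Lipschitzness; the second is non-positive by the optimistic choice~\eqref{eqn:optim_update}, since $(\hat{x}_t, \Theta_*) \in \mathcal{G}_{t-1} \times \mathcal{C}_{t-1}$ is a feasible candidate; the third is at most $M\sqrt{n}\,\|(\tilde{\Theta}_t - \Theta_*) x_t\|_\infty \leq 2M\sqrt{n\beta_T}\,\|x_t\|_{V_{t-1}^{-1}}$ by the same Cauchy--Schwarz step. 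Summing over $t \in (T', T]$, applying Cauchy--Schwarz in $t$, and invoking the elliptical potential lemma (Lemma~11 of \cite{abbasi2011improved}) produces the third term, where the $\max(H_\mathcal{Y}^\infty,1)$ factor absorbs the pointwise bound $\max_{t > T'} \|x_t\|_{V_{t-1}^{-1}} \leq \max(H_\mathcal{Y}^\infty,1)$ that follows by combining event (ii) with $T' \geq t_h$ to get $2\sqrt{\beta_T}\|x_t\|_{V_{t-1}^{-1}} \leq H_\mathcal{Y}^\infty$.

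The hard step is the proxy-action construction. The confidence set $\mathcal{C}_{t-1}$ controls deviations row-wise in the $V_{t-1}$-norm, whereas the shrunk set $\mathcal{Y}_\Delta^\infty$ is defined via an $\infty$-norm ball in the response space; the row-wise Cauchy--Schwarz step is the precise bridge between the two, and is exactly what forces the relevant sharpness to be measured in the $\infty$-norm. A secondary subtlety is that the elliptical potential lemma is cleanest when $\|x_t\|_{V_{t-1}^{-1}} \leq 1$, and the $\max(H_\mathcal{Y}^\infty, 1)$ factor is the price of handling the opposite case; the threshold $t_h$ is precisely what allows this maximum to be controlled by $H_\mathcal{Y}^\infty$.
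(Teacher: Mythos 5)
Your proposal is correct and follows essentially the same route as the paper: the same phase split with the trivial $2M\sqrt{n}LST'$ bound, the same two high-probability events, the same construction of a safe proxy point in the $\infty$-norm-shrunk $\mathcal{Y}$ (the paper packages this as the inclusion chain $\mathcal{Y}_{\ell}^{\infty}\subseteq\bar{\mathcal{Y}}\subseteq\tilde{\mathcal{Y}}^t$ in Lemma~\ref{lem:term_i}), and the same elliptical-potential argument with the $\max(H_{\mathcal{Y}}^{\infty},1)$ truncation for the estimation term. Your three-way per-round decomposition is just the paper's Term~I split into its sharpness and optimism pieces, so the arguments coincide.
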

\begin{corollary}
\label{cor:main}
    Assume the same as Theorem \ref{thm:main}.
    If $\mathcal{Y}$ is polytope-sharp, then the regret of Algorithm~\ref{alg:main_alg} satisfies
    \begin{equation*}
        \begin{split}
            R_T \leq {}& 2 M \sqrt{n} LS T' + \frac{2  n \sqrt{2 \beta_T} \Gamma_{\mathcal{Y}} L M (T - T') }{\sqrt{2 \nu + \lambda_- T'}}  \\
            & + M \max (H_{\mathcal{Y}}^\infty,1) \sqrt{n 8 \beta_T (T - T') d \log\left( \frac{1 + T L^2}{d \nu}\right)}
        \end{split}
    \end{equation*}    
    with probability at least $1 - 2 \delta$ when $T' \geq \max( t_{\delta}, t_h)$. In particular, choosing $T' = \max(T^{2/3},t_{\delta}, t_h)$ ensures that $R_T = \tilde{\mathcal{O}} \left(T^{2/3} \right)$.
\end{corollary}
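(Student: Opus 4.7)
The plan is to specialize the general regret bound of Theorem~\ref{thm:main} by invoking the linear sharpness bound of Proposition~\ref{prop:linear_sets}, and then to balance the length of the pure-exploration phase $T'$ to obtain the $\tilde{\mathcal{O}}(T^{2/3})$ rate. No new probabilistic argument is required, since the high-probability event under which Theorem~\ref{thm:main} holds is inherited directly.

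The first step is to apply Proposition~\ref{prop:linear_sets} to $\mathcal{Y} \subset \mathbb{R}^n$ with the $\infty$-norm. Because $C_{\|\cdot\|_\infty} = \max_{\|y\|_\infty = 1} \|y\|_2 = \sqrt{n}$, we have $\bar{C}_{\|\cdot\|_\infty} = \sqrt{n}$, and hence
\begin{equation*}
\mathrm{Sharp}_{\mathcal{Y}}^{\infty}(\Delta) \leq \sqrt{n} \cdot \sqrt{n}\, \Gamma_{\mathcal{Y}} \Delta = n\, \Gamma_{\mathcal{Y}} \Delta.
\end{equation*}
Substituting this with $\Delta = \frac{2\sqrt{2\beta_T}\,L}{\sqrt{2\nu + \lambda_- T'}}$ into the middle term of the bound from Theorem~\ref{thm:main} produces exactly the second summand claimed in the corollary, while the first and third summands pass through unchanged.

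For the rate statement, I would set $T' = \max(T^{2/3}, t_\delta, t_h)$ and examine each of the three terms. Since $t_\delta$ is a constant in $T$ and $t_h = \tilde{\mathcal{O}}(\log T)$ (because $\beta_T$ is polylogarithmic in $T$), for large $T$ we have $T' = \Theta(T^{2/3})$. Hence the first term $2M\sqrt{n}LST'$ is $\mathcal{O}(T^{2/3})$. The second term behaves as $\tilde{\mathcal{O}}\!\left(T / \sqrt{T'}\right) = \tilde{\mathcal{O}}(T^{2/3})$. The third term scales as $\tilde{\mathcal{O}}(\sqrt{\beta_T\,(T-T')}) = \tilde{\mathcal{O}}(\sqrt{T})$, which is dominated. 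Combining these gives $R_T = \tilde{\mathcal{O}}(T^{2/3})$.

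The main (and essentially only) obstacle is the bookkeeping of constants: confirming that the factor $\sqrt{m}\,\bar{C}_{\|\cdot\|}$ of Proposition~\ref{prop:linear_sets} collapses precisely to $n$ for the infinity norm on $\mathbb{R}^n$, so the coefficient in the second term of the corollary matches. Everything else is a direct substitution into Theorem~\ref{thm:main} followed by the standard $T^{2/3}$ balancing argument, with no additional probabilistic content since the $1-2\delta$ event from Theorem~\ref{thm:main} is untouched.
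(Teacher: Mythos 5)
Your proposal is correct and matches the paper's (implicit) argument exactly: the paper treats the corollary as an immediate consequence of Theorem \ref{thm:main} obtained by substituting the linear sharpness bound of Proposition \ref{prop:linear_sets} with $m=n$ and $\bar{C}_{\|\cdot\|_\infty}=\sqrt{n}$ (so $\sqrt{m}\,\bar{C}_{\|\cdot\|_\infty}=n$), and then balancing with $T'=\Theta(T^{2/3})$. Your accounting of the constants, the $\tilde{\mathcal{O}}(1)$ growth of $t_\delta$ and $t_h$, and the term-by-term rates is accurate.
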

We can see that the regret bound depends on the sharpness of $\mathcal{Y}$, and as shown in Corollary \ref{cor:main}, is $\tilde{\mathcal{O}}\left(T^{2/3} \right)$ when $\mathcal{Y}$ is polytope-sharp.
Note that the agent needs to know the maximum shrinkage of $\mathcal{Y}$, or a lower bound of it, in order to appropriately choose $T'$.
If there is a known subset of $\mathcal{Y}$ or it is known that $\mathcal{E}$ is a subset of $\Theta_* \mathcal{A}$ then the agent can calculate a lower bound on the maximum shrinkage of $\mathcal{Y}$.
Otherwise, there might be application specific information that provides a conservative estimate of the maximum shrinkage of $\mathcal{Y}$.

The complete proof of Theorem \ref{thm:main} is given in Appendix \ref{apx:thm_main}.
This proof utilizes a decomposition of the instantaneous regret given by
\begin{equation}
\label{eqn:reg_decomp}
    r_t := f(\Theta_* x_*) - f(\Theta_* x_t) = \underbrace{f(\Theta_* x_*) - f(\tilde{\Theta}_t x_t)}_{\text{Term I}} + \underbrace{f(\tilde{\Theta}_t x_t) - f(\Theta_* x_t)}_{\text{Term II}}.
\end{equation}
Term I captures the suboptimality of the optimistic pair $(x_t, \tilde{\Theta}_t )$ from \eqref{eqn:optim_update}, while Term II captures the shrinkage of the confidence set $\mathcal{C}_t$.
The pair $(x_t, \tilde{\Theta}_t )$ may be suboptimal due to the fact that $\mathcal{G}_t$ is a strict subset of $\mathcal{X}$, which is necessary to ensure safety.
Although the analysis of Term II can be handled with conventional bandit analysis, the analysis of Term I requires novel techniques, including sharpness, as we discuss in the following paragraph.

The bound on Term I is given in the following lemma.
\begin{lemma}
\label{lem:term_i}
    Let Assumptions \labelcref{ass:bounds,ass:init_set,ass:lipsch,ass:noise} hold. For $t \in (T', T]$, $\mathrm{Term\ I}$ is bounded as
    \begin{equation*}
        \mathrm{Term\ I} := f(\Theta_* x_*) - f(\tilde{\Theta}_t x_t) \leq M \mathrm{Sharp}_{\mathcal{Y}}^{\infty} \left( \frac{2  \sqrt{2 \beta_T} L }{\sqrt{2 \nu + \lambda_- T'}} \right)
    \end{equation*}
    when $T' \geq \max(t_\delta, t_h)$ with probability at least $1 - 2 \delta$.
\end{lemma}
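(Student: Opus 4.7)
The plan is to combine optimism with the sharpness bound on the feasible response set $\mathcal{Y}$. On the event $\{\Theta_* \in \mathcal{C}_{t-1}\}$, which holds with probability at least $1-\delta$ by Theorem~\ref{thm:conf_set}, the optimistic rule \eqref{eqn:optim_update} yields $f(\tilde{\Theta}_t x_t) \geq f(\Theta_* \hat{x})$ for every $\hat{x} \in \mathcal{G}_{t-1}$. Thus it suffices to exhibit some $\hat{x} \in \mathcal{G}_{t-1}$ whose response $\Theta_* \hat{x}$ is close to $\Theta_* x_*$, after which the Lipschitz property of $f$ closes the argument.

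Set $\Delta := \frac{2\sqrt{2\beta_T} L}{\sqrt{2\nu + \lambda_- T'}}$, matching the argument of sharpness in the claim. The key intermediate statement I would prove is the inclusion $\mathcal{Y}_{\Delta}^{\infty} \subseteq \Theta_* \mathcal{G}_{t-1}$: every point in the $\infty$-shrunk feasible response set is the response of some conservatively safe action. To see this, pick $y \in \mathcal{Y}_{\Delta}^{\infty}$; since $y \in \Theta_* \mathcal{A}$, there exists $x \in \mathcal{A}$ with $\Theta_* x = y$. For any $\Theta \in \mathcal{C}_{t-1}$, the $i$-th coordinate of $\Theta x - y$ equals $\langle \theta^i - \theta_*^i, x\rangle$, which by Cauchy--Schwarz in the $V_{t-1}$-norm is at most $\|\theta^i - \theta_*^i\|_{V_{t-1}}\,\|x\|_{V_{t-1}^{-1}} \leq 2\sqrt{\beta_{t-1}}\,\|x\|_2 / \sqrt{\lambda_{\min}(V_{t-1})}$ (triangle inequality within $\mathcal{C}_{t-1}$ plus Assumption~\ref{ass:bounds}). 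Once $\lambda_{\min}(V_{t-1})$ is large enough, this upper bound is at most $\Delta$, hence $\Theta x \in y + \bar{\mathcal{B}}_{\|\cdot\|_\infty}(\Delta) \subseteq \mathcal{E}$ by Definition~\ref{def:shrunk_set}, so $x \in \mathcal{G}_{t-1}$.

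The required lower bound on $\lambda_{\min}(V_{t-1})$ I would obtain via a matrix concentration inequality (e.g., matrix Chernoff) applied to the pure-exploration samples: since $\mathbb{E}[x_s x_s^\top] \succeq \lambda_- I$ for $s \in [T']$ and $\|x_s x_s^\top\|_2 \leq L^2$, with probability at least $1-\delta$ we have $\sum_{s=1}^{T'} x_s x_s^\top \succeq (\lambda_- T'/2)\, I$ whenever $T' \geq t_\delta$. Combined with the regularizer, $\lambda_{\min}(V_{t-1}) \geq (2\nu + \lambda_- T')/2$ for all $t > T'$, which inserted into the preceding estimate gives exactly $\|\Theta x - y\|_\infty \leq \Delta$ (using also $\beta_{t-1}\leq\beta_T$). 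The condition $T' \geq t_h$ is precisely what forces $\Delta \leq H_{\mathcal{Y}}^\infty$ so that $\mathcal{Y}_\Delta^\infty$ is nonempty (Proposition~\ref{prop:range}) and sharpness is defined.

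Putting the pieces together, the definition of sharpness applied at $\Theta_* x_* \in \mathcal{Y}$ yields some $y^\star \in \mathcal{Y}_\Delta^\infty$ with $\|y^\star - \Theta_* x_*\|_2 \leq \mathrm{Sharp}_{\mathcal{Y}}^{\infty}(\Delta)$; by the intermediate inclusion, $y^\star = \Theta_* \hat{x}$ for some $\hat{x} \in \mathcal{G}_{t-1}$. Optimism and $M$-Lipschitzness then give $f(\Theta_* x_*) - f(\tilde{\Theta}_t x_t) \leq f(\Theta_* x_*) - f(\Theta_* \hat{x}) \leq M \|\Theta_* x_* - y^\star\|_2 \leq M\,\mathrm{Sharp}_{\mathcal{Y}}^{\infty}(\Delta)$, which is the stated bound. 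A union bound over the confidence-set event and the gram-matrix event produces the $1 - 2\delta$ failure probability. The main obstacle is the intermediate inclusion $\mathcal{Y}_\Delta^\infty \subseteq \Theta_* \mathcal{G}_{t-1}$, since this is the step that genuinely ties the purely geometric notion of sharpness of $\mathcal{Y}$ to the algorithmic conservative safe set $\mathcal{G}_{t-1}$ through the response map $\Theta_*$; the remaining manipulations are standard for optimistic bandit algorithms.
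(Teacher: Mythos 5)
Your proposal is correct and follows essentially the same route as the paper: the doubled confidence radius via the triangle inequality (the paper's expanded set $\tilde{\mathcal{C}}_t$), the Cauchy--Schwarz bound $\langle \theta^i - \theta_*^i, x\rangle \leq 2\sqrt{\beta_t}\,L/\sqrt{\lambda_{\min}(V_{t-1})}$ combined with the pure-exploration eigenvalue bound to show that the $\ell$-shrunk set $\mathcal{Y}_{\Delta}^{\infty}$ is contained in the responses of $\mathcal{G}_{t-1}$ (the paper's chain $\mathcal{Y}_{\ell}^{\infty} \subseteq \bar{\mathcal{Y}} \subseteq \tilde{\mathcal{Y}}^t$), the role of $t_h$ in guaranteeing nonemptiness via Proposition~\ref{prop:range}, and the final optimism-plus-Lipschitz step yielding the sharpness bound. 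The only cosmetic difference is that you state the key inclusion directly as $\mathcal{Y}_{\Delta}^{\infty} \subseteq \Theta_* \mathcal{G}_{t-1}$ rather than introducing the intermediate sets $\tilde{\mathcal{G}}_t$ and $\bar{\mathcal{G}}$, and you invoke matrix concentration explicitly where the paper cites Lemma~\ref{lem:min_eig}.
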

The proof of Lemma \ref{lem:term_i} is given in Appendix \ref{sec:term_i} and considers a shrunk version of $\mathcal{Y}$ such that every possible $y$ in the shrunk version of $\mathcal{Y}$ can be given by $\Theta x$ with some $\Theta \in \mathcal{C}_t$ and some $x \in \mathcal{G}_t$.
This implies that $f(\tilde{\Theta} x_t)$ is greater than or equal to $f(y)$ for every $y$ in the shrunk version of $\mathcal{Y}$ and hence we can bound Term I with the difference between the optimal reward ($f(y_*)$, where $y_* = \Theta_* x_*$) and the reward from some $y$ in the shrunk version of $\mathcal{Y}$.
With the Lipschitz assumption on $f$, this can be bounded with the difference between $y_*$ and some $y$ in the shrunk version $\mathcal{Y}$.
By choosing $y$ to be the point in the shrunk version of $\mathcal{Y}$ that is closest to $y_*$, we can ultimately bound the regret with the sharpness of $\mathcal{Y}$ as given in Lemma \ref{lem:term_i}.

\begin{wrapfigure}{r}{0.5\textwidth}
\centering
    \vspace{-0.7in}
    \includegraphics[width=0.5\textwidth]{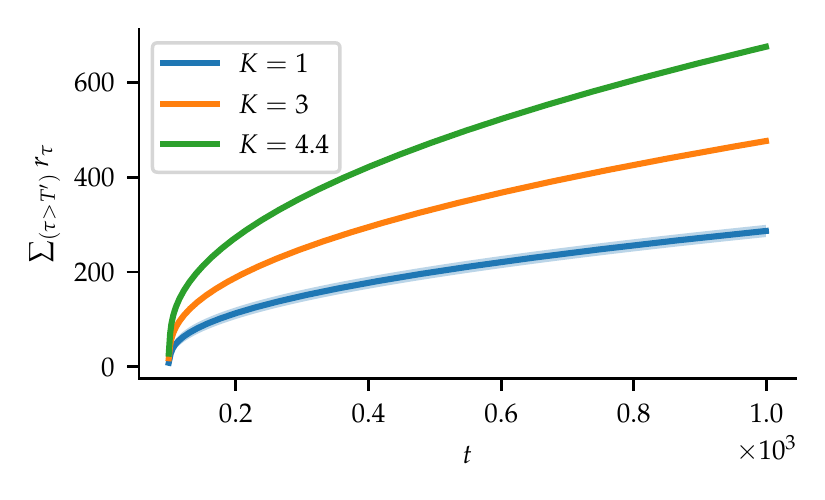}
    \vspace{-.4in}
    \caption{The cumulative sum of the instantaneous regret in the exploration-exploitation phase of the algorithm with polytopic constraint sets that have different $K$ constants (as defined in Proposition \ref{prop:sharp_polyt}).}
    \label{fig:cumreg}
    \vspace{-0.3in}
\end{wrapfigure}

\section{Numerical Experiments}
We simulated the results of the algorithm with three different polytopic safety sets of different sharpness in a problem setting where $n=3$ and $d=3$.
The cumulative sum of the instantaneous regret in the exploration-exploitation phase of the algorithm is shown in Figure \ref{fig:cumreg} for each polytopic safety set.
The solid line is an average of six trials and the shaded region indicates the 95\% confidence interval over the trials.
For each safety set, the plot shows its $K$ constant, as defined in Proposition \ref{prop:sharp_polyt}.
Each simulation has a different realization of the noise $\{ \epsilon_t \}_{t \in [T]}$.
Otherwise, the problem and algorithm parameters are the same for every simulation, and all the polytopic safety sets have the same maximum shrinkage.
Also, note that the action set $\mathcal{A}$ is chosen to be non-restrictive, such that $\mathcal{Y} = \mathcal{E}$.
Figure \ref{fig:cumreg} provides some empirical support for the sublinear regret bound in Theorem \ref{thm:main} and also indicates that the sharpness of the safe set impacts the regret of the algorithm.
The details of the simulation are given in Appendix \ref{apx:exps}.

\acks{This work was supported by NSF grant \#1847096.}

\bibliography{references.bib}

\appendix

\section{Proofs from Section \ref{sec:geom_props}}
\label{apx:geom_proofs}
\subsection{Proof of Proposition \ref{prop:no_empty}}
\begin{proof}
    First we show the ``if" direction.
    By definition, $\mathcal{D}$ has a nonempty interior if and only if there exists a point $x \in \mathcal{D}$ such that $x + \mathcal{B}_2(\epsilon) \subseteq \mathcal{D}$ for some $\epsilon > 0$ where $\mathcal{B}_2(\epsilon) = \{ x \in \mathbb{R}^m : \| x \|_2 < \epsilon \}$ is the open ball of radius $\epsilon$.
    Since $\mathcal{D}$ is closed, this is equivalent to the condition $x + \bar{\mathcal{B}}_2(\epsilon) \subseteq \mathcal{D}$, where $\bar{\mathcal{B}}_2(\epsilon) = \{ x \in \mathbb{R}^m : \| x \|_2 \leq \epsilon \}$.
    Norms on finite-dimensional vector spaces are equivalent, so for some norm $\| \cdot \|$, there exists positive constant $C$ such that for all $y \in \mathcal{B}_{\| \cdot \|}(\tilde{\epsilon})$, $C \| y \|_2 \leq  \| y \|  \leq \tilde{\epsilon}$.
    We can choose $\tilde{\epsilon} = C \epsilon $ such that $\mathcal{B}_{\| \cdot \|}(\tilde{\epsilon}) \subseteq \mathcal{B}_{2}(\epsilon)$, ensuring that $x + \mathcal{B}_{\| \cdot \|}(\tilde{\epsilon}) \subseteq \mathcal{D}$.
    Therefore, we know that $x + v \in \mathcal{D}, \forall v \in \mathcal{B}_{\| \cdot \|}(\Delta)$ for $\Delta = \tilde{\epsilon} > 0$ and hence that $\mathcal{D}_{\Delta}^{\| \cdot \|}$ is nonempty.

    Next, we show the ``only if" direction.
    There exists a $\Delta > 0$ such that $\mathcal{D}_{\Delta}^{\| \cdot \|}$ is nonempty, if and only if there exists a point $x \in \mathcal{D}_{\Delta}^{\| \cdot \|}$.
    The point $x$ must satisfy $x + v \in \mathcal{D}, \forall v \in \bar{\mathcal{B}}_{\| \cdot \|}(\Delta)$.
    It follows that $x + \bar{\mathcal{B}}_{\| \cdot \|}(\Delta) \subseteq \mathcal{D}$.
    From the equivalence of norms discussed previously, it follows that there is some $\epsilon > 0$ such that $\bar{\mathcal{B}}_{2}(\epsilon) \subseteq \bar{\mathcal{B}}_{\| \cdot \|}(\Delta)$ and therefore that $x + \bar{\mathcal{B}}_{2}(\epsilon) \subseteq \mathcal{D}$.
    This implies that $x + \mathcal{B}_{2}(\epsilon) \subseteq \mathcal{D}$ because $\mathcal{B}_{2}(\epsilon) \subset \bar{\mathcal{B}}_{2}(\epsilon)$.
    Therefore, $\mathcal{D}$ has a nonempty interior.
\end{proof}

\subsection{Proof of Proposition \ref{prop:range}}
\begin{proof}  
    Let $E := \{ \Delta \geq 0 : \mathcal{D}_{\Delta}^{\| \cdot \|} \neq \emptyset \}$ and note that $H_{\mathcal{D}}^{\| \cdot \|} = \sup E$ by definition.
    First, we show that $\sup E$ does in fact exist by showing that $E$ is nonempty and is bounded above.
    We know that $E$ is nonempty because $\mathcal{D}$ has a nonempty interior and Proposition \ref{prop:no_empty} tells us that for such sets, there exists a $\Delta > 0$ such that $\mathcal{D}_{\Delta}^{\| \cdot \|}$ is nonempty and therefore $E$ is nonempty.
    Also, note that $E$ is bounded above because $\mathcal{D}$ is bounded (i.e. there exists finite $r$ such that $\| x - y \|_2 \leq r$ for all $x,y \in \mathcal{D}$).
    To see this, first note that for any $\Delta \in E$ (i.e $\Delta$ such that $\mathcal{D}_{\Delta}^{\| \cdot \|} \neq \emptyset$) there exists $x \in \mathcal{D}$ such that $x + v \subseteq \mathcal{D}$ for all $v \in \bar{\mathcal{B}}_{\| \cdot \|}(\Delta)$ and due to equivalence of norms, it holds that $x + \tilde{v} \subseteq \mathcal{D}$ for all $\tilde{v} \in \bar{\mathcal{B}}_{2}(C\Delta)$ for some finite $C > 0$.
    Choosing $\tilde{v} = \mathbf{e}_1 C \Delta$, we have that $x$ and $x + \mathbf{e}_1 C \Delta$ are in $\mathcal{D}$ and hence $\| x - (x + \mathbf{e}_1 C \Delta) \|_2 = C \Delta \leq r$.
    Therefore, $E$ is bounded above.
    Since $E$ is nonempty and bounded above, $\sup E$ exists.

    Next, we show that $H := H_{\mathcal{D}}^{\| \cdot \|} = \sup E$ is in $E$.
    Due to the properties of the supremum, there necessarily exists a $\tilde{\Delta}_\epsilon \in E$ such that $\tilde{\Delta}_\epsilon > H - \epsilon$ for every $\epsilon > 0$.
    Therefore, for every $\epsilon \in (0,H]$ it holds that $\bar{\mathcal{B}}_{\| \cdot \|} (\tilde{\Delta}_\epsilon) \supset \bar{\mathcal{B}}_{\| \cdot \|} ( H - \epsilon)$ and there is some $x$ such that $x + \bar{\mathcal{B}}_{\| \cdot \|}(\tilde{\Delta}_\epsilon) \subseteq \mathcal{D}$.
    It follows that $x + \bar{\mathcal{B}}_{\| \cdot \|}(H - \epsilon) \subset \mathcal{D}$ for every $\epsilon \in (0,H]$.
    We claim that this implies that $x + \mathcal{B}_{\| \cdot \|}(H) \subset \mathcal{D}$.
    Suppose the contrary, i.e. $x + \mathcal{B}_{\| \cdot \|}(H) \not\subset \mathcal{D}$.
    This would imply that there exists $y$ such that $\| y \| < H$ and $x + y \not\in \mathcal{D}$.
    Since $x + y \not\in \mathcal{D}$ and $x + \bar{\mathcal{B}}_{\| \cdot \|}(H - \epsilon) \subset \mathcal{D}$ for every $\epsilon \in (0,H]$, it follows that $y$ cannot be in $\bar{\mathcal{B}}_{\| \cdot \|}(H - \epsilon)$ and therefore $\| y \| \not\leq H - \epsilon$ or equivalently, $\| y \| > H - \epsilon$.
    Therefore, it must be that $H - \epsilon < \| y \| < H$ for every $\epsilon \in (0,H]$.
    Rearranging, it must be that $\epsilon > H - \| y \| > 0$ for every $\epsilon \in (0,H]$, so we can choose $\epsilon = H - \| y \| > 0$ to break the previous statement.
    Therefore, it holds that $x + \mathcal{B}_{\| \cdot \|}(H) \subset \mathcal{D}$.
    Since $\mathcal{D}$ is closed, it is also the superset of $x + \bar{\mathcal{B}}_{\| \cdot \|}(H)$ and therefore, $H$ is in $E$.

    Lastly, we show that all $\Delta \in [0,H]$ are in $E$.
    Since there exists $x \in \mathcal{D}$ such that $x + \bar{\mathcal{B}}_{\| \cdot \|}(H) \subseteq \mathcal{D}$ and $\bar{\mathcal{B}}_{\| \cdot \|}(\Delta) \subseteq \bar{\mathcal{B}}_{\| \cdot \|}(H)$  for all $\Delta \in [0,H]$, it also holds that $x + \bar{\mathcal{B}}_{\| \cdot \|}(\Delta) \subseteq \mathcal{D}$.
\end{proof}

\subsection{Proof of Proposition \ref{prop:sharp_polyt}}
\begin{proof}
    First we show that $\mathcal{D}_{\Delta}^{\| \cdot \|}$ is a polytope within $\mathcal{D}$.
    To do so, we define the halfspace due to the $j$th constraint as $\mathcal{D}_j = \{ x \in \mathbb{R}^m : a_j ^{\top} x \leq b_j \}$ such that $\mathcal{D} = \bigcap_{j \in [p]} \mathcal{D}_j$.
    With some slight abuse of notation, let
        \begin{align*}
        \mathcal{D}_{j,\Delta}^{\| \cdot \|} & := \{ x \in \mathbb{R}^m : a_j ^{\top} (x + v) \leq b_j, \forall v \in  \bar{\mathcal{B}}_{\| \cdot \|} (\Delta) \}\\
        & = \{ x \in \mathbb{R}^m: \max_{v \in \bar{\mathcal{B}}_{\| \cdot \|} (\Delta)} a_j ^{\top} (x + v) \leq b_j \}\\
        & = \{ x \in \mathbb{R}^m : a_j ^{\top} x + \Delta \| a_j \|_{\star} \leq b_j \}\\
        & = \{ x \in \mathbb{R}^m : a_j ^{\top} x \leq b_j - \Delta \| a_j \|_{\star} \},
    \end{align*}
    where $\| \cdot \|_{\star}$ is the dual norm of $\| \cdot \|$.
    Therefore, $\mathcal{D}_{\Delta}^{\| \cdot \|} = \bigcap_{j \in [p]} \mathcal{D}_{j,\Delta}^{\| \cdot \|}$ is a polytope such that each of its constituent constraints are parallel to a corresponding constraint of $\mathcal{D}$, i.e. $\mathcal{D}_{j,\Delta}^{\| \cdot \|}$ is parallel to $\mathcal{D}_j$ for all $j \in [p]$.
    However, it is important to note that some of the constraints $\mathcal{D}_{j,\Delta}^{\| \cdot \|}$ may be redundant.
    Let $E$ be the set of such constraints that are not redundant, i.e. the smallest set $E \subseteq [p]$ such that $\mathcal{D}_{\Delta}^{\| \cdot \|} = \bigcap_{j \in E} \mathcal{D}_{j,\Delta}^{\| \cdot \|}$.
    In order to study the sharpness of $\mathcal{D}$, we will use the polytope $\tilde{\mathcal{D}} := \bigcap_{j \in E} \mathcal{D}_j$.
    The polytope $\tilde{\mathcal{D}}$ is useful because for each non-redundant constraint of $\tilde{\mathcal{D}}$, there is a parallel non-redundant constraint of $\mathcal{D}_\Delta^{\| \cdot \|}$.
    We also know that $\mathcal{D} \subseteq \tilde{\mathcal{D}}$ since
    \begin{equation*}
        \mathcal{D} = \bigcap_{j \in [p]} \mathcal{D}_j = \left( \bigcap_{j \in E} \mathcal{D}_j \right) \cap \left( \bigcap_{j \in [p] \setminus E} \mathcal{D}_j \right) \subseteq \bigcap_{j \in E} \mathcal{D}_j = \tilde{\mathcal{D}}.
    \end{equation*}
    Therefore, it holds that
    \begin{equation*}
        \mathrm{Sharp}_{\mathcal{D}}^{\| \cdot \|} (\Delta) = \max_{x \in \mathcal{D}} \min_{y \in \mathcal{D}_\Delta^{\| \cdot \|}} \| x - y \|_2 \leq \max_{x \in \tilde{\mathcal{D}}} \min_{y \in \mathcal{D}_\Delta^{\| \cdot \|}} \| x - y \|_2.
    \end{equation*}
    Hence, we study a point in $\tilde{\mathcal{D}}$ that has the greatest projection distance, i.e.
    \begin{equation*}
        u \in \argmax_{x \in \tilde{\mathcal{D}}}  \underbrace{\min_{y \in \mathcal{D}_{\Delta}^{\| \cdot \|}} \| y - x  \|_2}_{g(x)}.
    \end{equation*}
    Since $\| y - x  \|_2$ is convex in $(x,y)$, we know that $g(x)$ is convex (\cite{boyd2004convex} Sec. 3.2.5).
    The maximizer of a convex function over a polytope is at a vertex,\footnote{Each point in a polytope can be described as $x = \sum_{i \in [n]} \lambda_i v_i$ where $v_i$ are the vertices and $\lambda_i \geq 0$ with $\sum_{i \in [n]} \lambda_i=1$. By Jensen's inequality, it follows that $f(x) = f(\sum_{i \in [n]} \lambda_i v_i) \leq \sum_{i=[n]} \lambda_i f(v_i) \leq \max_{i \in [n]} f(v_i)$. Since all $v_i$ are in the polytope, the set of maxima must include at least one of them.} and hence $u$ is at some vertex of $\tilde{\mathcal{D}}$.

    Now, we bound the distance between $u$, and the vertex in $\mathcal{D}_\Delta^{\| \cdot \|}$ with the same active constraints as $u$.
    To do so, we first recall the notation from the main text and introduce some new notation.
    The notation $\mathcal{I}_A$ refers to a collection of sets of length $m$ such that for each $\ell = \{i_1, i_2, ..., i_m\} \in \mathcal{I}_A$ the vectors $a_{i_1}, a_{i_2}, ..., a_{i_m}$ are linearly independent.
    We also use the notation $A^\ell = [a_{i_1}\ a_{i_2}\ ...\ a_{i_m}]^\top$ and $b^\ell = [b_{i_1}\ b_{i_2}\ ...\ b_{i_m}]^\top$.
    Note that, for each vertex of a polytope without redundant constraints, there are $m$ linearly independent constraints that are active at that vertex, and they form an element of $\mathcal{I}_A$.
    Accordingly, for a vertex $v$, we use the notation $A^v = [a_{i_1}\ a_{i_2}\ ...\ a_{i_m}]^\top$ and $b^v = [b_{i_1}\ b_{i_2}\ ...\ b_{i_m}]^\top$ where $\ell_v = \{i_1, i_2, ..., i_m\}$ is the set of $m$ linearly independent active constraints at that vertex.
    We now proceed to bound the distance between $u$ and the vertex $v \in \mathcal{D}_\Delta^{\| \cdot \|}$ with the same active constraints as $u$.
    Since the same constraints are tight at both $u$ and $v$, we have that $A^v u = b^v$ and $A^v v = b^v - \Delta \alpha_v$, where $\alpha_v := [\| a_{i_1} \|_{\star} \ ...\ \| a_{i_m}  \|_{\star}]^{\top}$.
    Therefore, we have that $u = (A^v)^{-1} b^v$ and $v = (A^v)^{-1} \left( b^v - \Delta \alpha_v \right)$.
    It follows that
    \begin{align*}
        \| u - v \|_2 & = \| (A^v)^{-1} b^v - (A^v)^{-1} \left( b^v - \Delta \alpha_v \right)\|_2\\
        & = \Delta \| (A^v)^{-1} \alpha_v \|_2\\
        & \leq \Delta \| (A^v)^{-1} \|_2 \| \alpha_v \|_2\\
        & = \Delta \frac{ \sqrt{\sum_{i \in \ell_v} \| a_i \|_{\star}^2}}{\sigma_{min} ( A^v )}
    \end{align*}
    Now, consider the numerator of the above:
    \begin{align*}
        \sqrt{\sum_{i \in \ell_v} \| a_i \|_{\star}^2} & \leq C_{\| \cdot \|} \sqrt{\sum_{i \in \ell_v} \| a_i \|_2^2} = C_{\| \cdot \|} \| A^v \|_F \leq \sqrt{m} C_{\| \cdot \|} \| A^v \|_2 = \sqrt{m} C_{\| \cdot \|} \sigma_{max} (A^v)
    \end{align*}
    where $C_{\| \cdot \|} = \max_{\|x\|_2 = 1} \| x \|_\star$ such that $\| x \|_\star \leq C_{\| \cdot \|} \| x \|_2$ for all $x \in \mathbb{R}^m$.
    Therefore, we have that
    \begin{equation*}
        \| u - v \|_2 \leq \Delta \frac{ \sqrt{\sum_{i \in \ell_v} \| a_i \|_{\star}^2}}{\sigma_{min} ( A^v )} \leq \sqrt{m} C_{\| \cdot \|}  \frac{ \sigma_{max} (A^v)}{\sigma_{min} ( A^v )} \Delta = \sqrt{m} C_{\| \cdot \|} \kappa(A^v) \Delta,
    \end{equation*}
    where $\kappa(A^v)$ is the condition number of $A^v$.
    Since $v$ is in $\mathcal{D}_{\Delta}^{\| \cdot \|}$, we have that 
    \begin{equation*}
        \mathrm{Sharp}_{\mathcal{D}}^{\| \cdot \|} (\Delta) \leq \min_{y \in \mathcal{D}_{\Delta}^{\| \cdot \|}} \| y - u  \|_2 \leq \| v - u  \|_2 \leq \sqrt{m} C_{\| \cdot \|} K  \Delta
    \end{equation*}
    where $K = \max_{\ell \in \mathcal{I}_A} \kappa(A^\ell)$.
    We further characterize $C_{\| \cdot \|}$ as
    \begin{equation*}
        C_{\| \cdot \|} = \max_{\|x\|_2 = 1} \| x \|_\star = \max_{\|x\|_2 = 1} \max_{\| y \| = 1} y^\top x = \max_{\| y \| = 1} \max_{\|x\|_2 = 1} y^\top x = \max_{\| y \| = 1} \| y \|_2.
    \end{equation*}
    Note that $\| x \|_2 \leq C_{\| \cdot \|} \| x \|$ for all $x \in \mathbb{R}^m$.
\end{proof}

\subsection{Proof of Proposition \ref{prop:linear_sets}}
First, we have a lemma that we will need for the proof.
\begin{lemma}
\label{lem:conv_subs}
    For compact sets  $\mathcal{A}, \mathcal{D} \subset \mathbb{R}^m$ that have nonempty interior and satisfy $\mathcal{A} \subseteq \mathcal{D}$, it holds that $\min_{y \in \mathcal{D}_{\Delta}^{\| \cdot \|}} \| y - x \|_2 \leq \min_{y \in \mathcal{A}_{\Delta}^{\| \cdot \|}} \| y - x \|_2$ for all $x \in \mathcal{D}$.
\end{lemma}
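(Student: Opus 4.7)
The plan is to reduce the lemma to a straightforward monotonicity argument by first establishing that the shrinking operation preserves the subset relation, i.e., that $\mathcal{A}_{\Delta}^{\| \cdot \|} \subseteq \mathcal{D}_{\Delta}^{\| \cdot \|}$ whenever $\mathcal{A} \subseteq \mathcal{D}$. Once this inclusion is in hand, the claimed inequality follows immediately because minimizing a nonnegative function over a larger set can only decrease (or maintain) the infimum value.

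First I would prove the set inclusion directly from Definition \ref{def:shrunk_set}. Take an arbitrary point $y \in \mathcal{A}_{\Delta}^{\| \cdot \|}$. By definition, $y \in \mathcal{A}$ and $y + v \in \mathcal{A}$ for every $v \in \bar{\mathcal{B}}_{\| \cdot \|}(\Delta)$. Applying the hypothesis $\mathcal{A} \subseteq \mathcal{D}$ to each of these memberships yields $y \in \mathcal{D}$ and $y + v \in \mathcal{D}$ for all $v \in \bar{\mathcal{B}}_{\| \cdot \|}(\Delta)$, which is exactly the condition for $y \in \mathcal{D}_{\Delta}^{\| \cdot \|}$. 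Hence $\mathcal{A}_{\Delta}^{\| \cdot \|} \subseteq \mathcal{D}_{\Delta}^{\| \cdot \|}$.

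Second, I would conclude by a standard monotonicity-of-infimum argument: for any fixed $x \in \mathcal{D}$, the function $y \mapsto \|y - x\|_2$ is nonnegative, and taking its minimum over a superset cannot exceed the minimum over a subset. Therefore
\begin{equation*}
    \min_{y \in \mathcal{D}_{\Delta}^{\| \cdot \|}} \| y - x \|_2 \;\leq\; \min_{y \in \mathcal{A}_{\Delta}^{\| \cdot \|}} \| y - x \|_2,
\end{equation*}
which is exactly the claim. I would also briefly note that the assumption that $\mathcal{A}$ has nonempty interior, together with Proposition \ref{prop:no_empty}, ensures $\mathcal{A}_{\Delta}^{\| \cdot \|}$ can be made nonempty for appropriate $\Delta$, and via the subset inclusion just established this automatically forces $\mathcal{D}_{\Delta}^{\| \cdot \|}$ to be nonempty whenever the right-hand side is finite, so both minima are well-defined exactly when needed.

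There is essentially no significant obstacle here; the entire content of the lemma is that shrinking commutes well with set inclusion, and the proof is simply an unwrapping of definitions followed by the trivial remark that $A \subseteq B$ implies $\inf_B f \leq \inf_A f$. The only mild care needed is to handle the case where one or both shrunk sets might be empty, but this can be dispatched by noting that the minimum over the empty set is taken to be $+\infty$ by convention, so the inequality holds vacuously in that case, while the inclusion $\mathcal{A}_{\Delta}^{\| \cdot \|} \subseteq \mathcal{D}_{\Delta}^{\| \cdot \|}$ rules out the only genuinely problematic configuration.
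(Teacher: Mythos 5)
Your proposal is correct and follows exactly the same route as the paper: establish the inclusion $\mathcal{A}_{\Delta}^{\| \cdot \|} \subseteq \mathcal{D}_{\Delta}^{\| \cdot \|}$ by unwrapping Definition \ref{def:shrunk_set} together with $\mathcal{A} \subseteq \mathcal{D}$, then invoke monotonicity of the minimum under set inclusion. Your additional remark about nonemptiness of the shrunk sets is a harmless (and slightly more careful) addendum that the paper omits.
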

\begin{proof}
    First, we show that $\mathcal{A}_{\Delta}^{\| \cdot \|} \subseteq \mathcal{D}_{\Delta}^{\| \cdot \|}$.
    For any $a \in \mathcal{A}_{\Delta}$, it holds that $a + \bar{\mathcal{B}}_{\| \cdot \|} ( \Delta) \subseteq \mathcal{A}$.
    Since $\mathcal{A} \subseteq \mathcal{D}$, it follows that $a + \bar{\mathcal{B}}_{\| \cdot \|} ( \Delta) \subseteq \mathcal{D}$ and hence $a \in \mathcal{D}_{\Delta}^{\| \cdot \|}$.
    Therefore, $\mathcal{A}_{\Delta}^{\| \cdot \|} \subseteq \mathcal{D}_{\Delta}^{\| \cdot \|}$
    and hence $\min_{y \in \mathcal{D}_{\Delta}^{\| \cdot \|}} \| y - x \|_2 \leq \min_{y \in \mathcal{A}_{\Delta}^{\| \cdot \|}} \| y - x \|_2$ for all $x \in \mathcal{D}$.
\end{proof}
Then, we have the proof of Proposition \ref{prop:linear_sets}.

\begin{proof}
In order to bound the sharpness, we choose some arbitrary $\mathcal{F}_x \in F_{\mathcal{D}}(x)$ for each $x \in \mathcal{D}$ and use Lemma \ref{lem:conv_subs} as
\begin{align*}
    \mathrm{Sharp}_{\mathcal{D}}^{\| \cdot \|} (\Delta) & = \max_{x \in \mathcal{D}} \min_{y \in \mathcal{D}_{\Delta}^{\| \cdot \|}}  \left \| y - x \right \|_2\\
    & \leq \max_{x \in \mathcal{D}} \min_{y \in \mathcal{F}_{x,\Delta}^{\| \cdot \|}}  \left \| y - x \right \|_2\\
    & \leq \max_{x \in \mathcal{D}} \max_{z \in \mathcal{F}_x} \min_{y \in \mathcal{F}_{x,\Delta}^{\| \cdot \|}}  \left \| y - z \right \|_2\\
    & = \max_{x \in \mathcal{D}} \mathrm{Sharp}_{\mathcal{F}_x}^{\| \cdot \|} (\Delta)\\
    & \leq \Delta \sqrt{m} C_{\| \cdot \|}   \max_{x \in \mathcal{D}} K_{\mathcal{F}_x}
\end{align*}
which is valid for $\Delta \in [0, \bar{H}_{\mathcal{F}}^{\| \cdot \|}]$, where $\bar{H}_{\mathcal{F}}^{\| \cdot \|} := \min_{x \in \mathcal{D}} H_{\mathcal{F}_x}^{\| \cdot \|} > 0$ given that every $\mathcal{F}_x$ has nonempty interior by definition and Proposition \ref{prop:no_empty}.
We use the notation $\bar{K}_{\mathcal{F}} = \max_{x \in \mathcal{D}} K_{\mathcal{F}_x}$.

We can then use the boundedness of $\mathcal{D}$ to establish a linear bound for the remainder of the domain of the sharpness, i.e. for $\Delta \in (\bar{H}_{\mathcal{F}}^{\| \cdot \|},H_{\mathcal{D}}^{\| \cdot \|}]$.
From the boundedness of $\mathcal{D}$, we have that the diameter $r_{\mathcal{D}} = \sup_{x,y \in \mathcal{D}} \| x - y \|_2$ is finite and hence we have the trivial sharpness bound $\mathrm{Sharp}_{\mathcal{D}}^{\| \cdot \|} (\Delta) \leq r_{\mathcal{D}}$.
This gives the linear bound $\mathrm{Sharp}_{\mathcal{D}}^{\| \cdot \|} (\Delta) \leq \frac{r_{\mathcal{D}}}{\bar{H}_{\mathcal{F}}^{\| \cdot \|}} \Delta$ for all $\Delta \in (\bar{H}_{\mathcal{F}}^{\| \cdot \|},H_{\mathcal{D}}^{\| \cdot \|}]$.
Therefore, we have that 
\begin{equation*}
    \mathrm{Sharp}_{\mathcal{D}}^{\| \cdot \|} (\Delta) \leq \max \left \{ \sqrt{m} C_{\| \cdot \|} \bar{K}_{\mathcal{F}}, \frac{r_{\mathcal{D}}}{\bar{H}_{\mathcal{F}}^{\| \cdot \|}} \right \} \Delta \leq \sqrt{m} \bar{C}_{\| \cdot \|} \max \left \{  \bar{K}_{\mathcal{F}}, \frac{r_{\mathcal{D}}}{\bar{H}_{\mathcal{F}}^{\| \cdot \|}} \right \} \Delta,
\end{equation*}
where $\bar{C}_{\| \cdot \|} = \max( C_{\| \cdot \|},1)$.
\end{proof}

\subsection{Proof of Corollary \ref{cor:sharp_conv}}
\begin{proof}
    Given Proposition \ref{prop:linear_sets}, it is sufficient to show that a polytope with nonempty interior can be constructed to contain each point in $\mathcal{D}$ while being a subset of $\mathcal{D}$.
    Since $\mathcal{D}$ has a nonempty interior, we can construct a polytope $\mathcal{A}$ with nonempty interior in $\mathcal{D}$ (e.g. a hypercube).
    Then, from the convexity of $\mathcal{D}$, we can see that the polytope $\bar{\mathcal{A}} = \mathrm{conv}(\mathcal{A},x)$ contains $x$ and lies in $\mathcal{D}$ for each $x \in \mathcal{D}$.
    Therefore, a convex set $\mathcal{D}$ is polytope-sharp and the result in Proposition \ref{prop:linear_sets} applies. 
\end{proof}

\subsection{Additional Properties}
\begin{proposition}
\label{prop:subset}
    For compact sets $\mathcal{A},\mathcal{D} \subset \mathbb{R}^m$ with nonempty interiors, where $\mathcal{A} \subseteq \mathcal{D}$, we have that $H_{\mathcal{A}}^{\| \cdot \|} \leq H_{\mathcal{D}}^{\| \cdot \|}$.
\end{proposition}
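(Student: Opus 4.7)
The plan is to prove this directly via set inclusion of the shrunk versions, using the definition of maximum shrinkage as a supremum. The essential observation is a monotonicity property: if a point witnesses nonemptiness of the shrunk version of a smaller set, it also witnesses nonemptiness of the shrunk version of any larger set containing it.

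First, I would establish the containment $\mathcal{A}_{\Delta}^{\| \cdot \|} \subseteq \mathcal{D}_{\Delta}^{\| \cdot \|}$ for every $\Delta \geq 0$. Indeed, take any $x \in \mathcal{A}_{\Delta}^{\| \cdot \|}$; by Definition \ref{def:shrunk_set}, $x \in \mathcal{A}$ and $x + v \in \mathcal{A}$ for all $v \in \bar{\mathcal{B}}_{\| \cdot \|}(\Delta)$. Since $\mathcal{A} \subseteq \mathcal{D}$, this yields $x \in \mathcal{D}$ and $x + v \in \mathcal{D}$ for all $v \in \bar{\mathcal{B}}_{\| \cdot \|}(\Delta)$, so $x \in \mathcal{D}_{\Delta}^{\| \cdot \|}$. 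This is essentially the same argument used in Lemma \ref{lem:conv_subs} above.

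Next, I would leverage this inclusion to conclude the inequality between the suprema. Define
\begin{equation*}
  E_{\mathcal{A}} := \{ \Delta \geq 0 : \mathcal{A}_{\Delta}^{\| \cdot \|} \neq \emptyset \}, \qquad E_{\mathcal{D}} := \{ \Delta \geq 0 : \mathcal{D}_{\Delta}^{\| \cdot \|} \neq \emptyset \}.
\end{equation*}
By the containment just shown, any $\Delta \in E_{\mathcal{A}}$ satisfies $\emptyset \neq \mathcal{A}_{\Delta}^{\| \cdot \|} \subseteq \mathcal{D}_{\Delta}^{\| \cdot \|}$, so $\Delta \in E_{\mathcal{D}}$, giving $E_{\mathcal{A}} \subseteq E_{\mathcal{D}}$. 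Taking suprema and appealing to Definition \ref{def:delt_max} yields $H_{\mathcal{A}}^{\| \cdot \|} = \sup E_{\mathcal{A}} \leq \sup E_{\mathcal{D}} = H_{\mathcal{D}}^{\| \cdot \|}$, as desired. (Both suprema exist and are finite by the argument in the proof of Proposition \ref{prop:range}, since both sets have nonempty interior and are bounded.)

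I do not anticipate any real obstacle here; the proof is a short monotonicity argument. The only minor care required is to note that the suprema are well defined, which follows from the boundedness of $\mathcal{A}$ and $\mathcal{D}$ together with the nonempty interior hypothesis, exactly as established in Proposition \ref{prop:range}.
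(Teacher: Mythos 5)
Your proposal is correct and follows essentially the same monotonicity argument as the paper: a witness point for nonemptiness of $\mathcal{A}_{\Delta}^{\| \cdot \|}$ is also a witness for $\mathcal{D}_{\Delta}^{\| \cdot \|}$, so the feasible shrinkages for $\mathcal{A}$ are feasible for $\mathcal{D}$ and the suprema compare. The only cosmetic difference is that you phrase this as an inclusion of the sets $E_{\mathcal{A}} \subseteq E_{\mathcal{D}}$ and compare suprema directly, whereas the paper invokes Proposition \ref{prop:range} to range over $\Delta \in [0, H_{\mathcal{A}}^{\| \cdot \|}]$; both are sound.
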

\begin{proof}
    Consider any $\Delta \in [0,H_{\mathcal{A}}^{\| \cdot \|}]$.
    From Prop. \ref{prop:range} and the definition of a shrunk set, there exists $x \in \mathcal{A}^{\| \cdot \|}_{\Delta}$ such that $x + \mathcal{B}_{\| \cdot \|}(\Delta) \subseteq \mathcal{A}$.
    Since $\mathcal{A} \subseteq \mathcal{D}$, we have that $x + \mathcal{B}_{\| \cdot \|}(\Delta) \subseteq \mathcal{D}$ and hence $\mathcal{D}^{\| \cdot \|}_{\Delta}$ is nonempty and $\Delta \leq H_{\mathcal{D}}^{\| \cdot \|}$.
    Since this applies for all $\Delta \in [0,H_{\mathcal{A}}^{\| \cdot \|}]$, it follows that $H_{\mathcal{A}}^{\| \cdot \|} \leq H_{\mathcal{D}}^{\| \cdot \|}$.
\end{proof}

\section{Proof of Theorem \ref{thm:main}}
\label{apx:thm_main}
We use the decomposition of the instantaneous regret specified in \eqref{eqn:reg_decomp}.
Accordingly, we discuss Term I in Section \ref{sec:term_i}, Term II in Section \ref{sec:term_ii} and the complete regret bound in Section \ref{sec:comp_bound}.

\subsection{Term I (Proof of Lemma \ref{lem:term_i})}
\label{sec:term_i}

Before getting to the proof of Lemma \ref{lem:term_i}, we need a lemma from \cite{amani2019linear}.

\begin{lemma}
\label{lem:min_eig}
    (Lemma 1 in \cite{amani2019linear}) We have with probability at least $1 - \delta$ that
    \begin{equation}
        \lambda_{\text{min}} (V_{T'}) \geq \nu + \frac{\lambda_- T'}{2},
    \end{equation}
    for $T' \geq t_\delta := \frac{8 L^2}{\lambda_-} \log(\frac{d}{\delta})$.
\end{lemma}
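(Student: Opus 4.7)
The plan is to apply a matrix Chernoff-type lower tail bound to the sum $\sum_{s=1}^{T'} x_s x_s^\top$. First I would observe that the pure exploration phase of Algorithm \ref{alg:main_alg} samples actions $\{x_s\}_{s=1}^{T'}$ i.i.d.\ from $\mathcal{G}^0$, so the summands $X_s := x_s x_s^\top$ form an i.i.d.\ sequence of rank-one positive semidefinite matrices. By Assumption \ref{ass:bounds} they satisfy $\lambda_{\max}(X_s) = \|x_s\|_2^2 \leq L^2$ almost surely, and by the definition of $\lambda_-$ given in the algorithm description, $\mathbb{E}[x_s x_s^\top] \succeq \lambda_- I$, which yields $\mu_{\min} := \lambda_{\min}\!\left(\mathbb{E}\bigl[\sum_{s=1}^{T'} X_s\bigr]\right) \geq \lambda_- T'$.

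Next I would invoke Tropp's matrix Chernoff inequality: for independent PSD random matrices $X_s \in \mathbb{R}^{d \times d}$ with $\lambda_{\max}(X_s) \leq R$ almost surely and any $\varepsilon \in (0,1]$,
\[
\Pr\!\left[\lambda_{\min}\!\bigl(\textstyle\sum_s X_s\bigr) \leq (1-\varepsilon)\,\mu_{\min}\right] \;\leq\; d\,\exp\!\left(-\tfrac{\varepsilon^2 \mu_{\min}}{2R}\right).
\]
Specializing to $\varepsilon = 1/2$, $R = L^2$, and using $\mu_{\min} \geq \lambda_- T'$ gives
\[
\Pr\!\left[\lambda_{\min}\!\bigl(\textstyle\sum_s x_s x_s^\top\bigr) < \tfrac{\lambda_- T'}{2}\right] \;\leq\; d\,\exp\!\left(-\tfrac{\lambda_- T'}{8 L^2}\right).
\]
Forcing the right-hand side below $\delta$ requires $T' \geq \tfrac{8 L^2}{\lambda_-} \log(d/\delta) = t_\delta$, which is exactly the threshold stated in the lemma.

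Finally, since $V_{T'} = \nu I + \sum_{s=1}^{T'} x_s x_s^\top$ and the minimum eigenvalue is super-additive under PSD perturbations (Weyl's inequality), on the high-probability event above,
\[
\lambda_{\min}(V_{T'}) \;\geq\; \nu + \lambda_{\min}\!\bigl(\textstyle\sum_{s=1}^{T'} x_s x_s^\top\bigr) \;\geq\; \nu + \tfrac{\lambda_- T'}{2},
\]
which completes the argument. The hard part here is essentially nil: the proof reduces to invoking a standard matrix concentration inequality. The only care required is bookkeeping with the constants, namely that the deviation parameter $\varepsilon = 1/2$ is precisely what produces the factor $8$ in $t_\delta$, and that the i.i.d.\ structure together with the lower bound $\mathbb{E}[x_s x_s^\top] \succeq \lambda_- I$ (guaranteed by the sampling scheme whose existence is justified in the footnote following the algorithm) delivers the required $\mu_{\min}$.
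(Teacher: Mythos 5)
The paper does not actually prove this lemma itself: it is imported verbatim as Lemma 1 of \cite{amani2019linear}, so there is no in-paper argument to compare against. Your reconstruction via Tropp's matrix Chernoff lower-tail bound (with $\varepsilon = 1/2$ producing the factor $8$ in $t_\delta$, the i.i.d.\ sampling from $\mathcal{G}^0$ supplying $\mathbb{E}[x_s x_s^\top] \succeq \lambda_- I$, and Weyl's inequality adding back the $\nu$ from the regularizer) is correct and is essentially the argument given in that reference.
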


We also need to show that the sharpness of $\mathcal{Y}$ is well defined, for which we need the following lemma.

\begin{lemma}
\label{lem:nonemp_int}
    For a set $\mathcal{D} \subset \mathbb{R}^d$ with nonempty interior and full rank matrix $N \in \mathbb{R}^{n \times d}$ with $n \leq d$, the set $\mathcal{H} = N \mathcal{D}$ has a nonempty interior.
\end{lemma}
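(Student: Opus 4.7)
The plan is to exploit the surjectivity of $N$ (which follows from the full rank assumption together with $n \le d$) to transport an open ball around an interior point of $\mathcal{D}$ to an open ball around its image in $N\mathcal{D}$. Concretely, I would proceed as follows.

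First, since $\mathcal{D}$ has nonempty interior, fix a point $x_0$ in the interior together with a radius $r > 0$ for which $x_0 + \mathcal{B}_2(r) \subseteq \mathcal{D}$. Set $y_0 := N x_0 \in N\mathcal{D}$; the goal is to produce a radius $\rho > 0$ such that $y_0 + \mathcal{B}_2(\rho) \subseteq N\mathcal{D}$.

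Second, since $N \in \mathbb{R}^{n \times d}$ is full rank with $n \le d$, the matrix $N N^{\top} \in \mathbb{R}^{n \times n}$ is invertible, so the right pseudoinverse $N^{+} := N^{\top} (N N^{\top})^{-1}$ is well defined and satisfies $N N^{+} = I_n$. Its spectral norm equals $1/\sigma_{\min}(N)$, where $\sigma_{\min}(N) > 0$ is the smallest singular value of $N$. Choose
\begin{equation*}
    \rho := r\, \sigma_{\min}(N) > 0.
\end{equation*}

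Third, for any $z \in \mathbb{R}^n$ with $\|z\|_2 < \rho$, define $x := x_0 + N^{+} z$. Then
\begin{equation*}
    \|x - x_0\|_2 = \|N^{+} z\|_2 \le \|N^{+}\|_2 \, \|z\|_2 < \frac{\rho}{\sigma_{\min}(N)} = r,
\end{equation*}
so $x \in x_0 + \mathcal{B}_2(r) \subseteq \mathcal{D}$, and
\begin{equation*}
    N x = N x_0 + N N^{+} z = y_0 + z.
\end{equation*}
Hence $y_0 + z \in N\mathcal{D}$ for every $z \in \mathcal{B}_2(\rho)$, which shows $y_0$ is an interior point of $\mathcal{H} = N\mathcal{D}$.

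There is essentially no subtle obstacle here; the only thing to be careful about is the distinction between ``full rank $N$'' and ``$N$ surjective'', which is precisely why the assumption $n \le d$ is included. Without it, $N$ would map into a proper subspace of $\mathbb{R}^n$ and the image would have empty interior regardless of $\mathcal{D}$.
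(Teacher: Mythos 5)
Your proof is correct, but it takes a genuinely different route from the paper. The paper argues combinatorially: it picks $n+1$ points $x_0,\dots,x_n$ inside the ball around an interior point of $\mathcal{D}$, chooses the direction vectors so that their images under $N$ are linearly independent (possible because $\operatorname{rank}(N)=n$), and concludes that the resulting simplex $\mathrm{conv}(v_0,\dots,v_n)$ is nondegenerate, hence has nonempty interior, and is contained in $\mathcal{H}$. You instead use the surjectivity of $N$ directly via the right pseudoinverse $N^{+}=N^{\top}(NN^{\top})^{-1}$, transporting the ball $x_0+\mathcal{B}_2(r)\subseteq\mathcal{D}$ to an explicit ball $y_0+\mathcal{B}_2(r\,\sigma_{\min}(N))\subseteq N\mathcal{D}$. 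Your version is more direct and quantitative --- it produces an explicit radius for the interior ball in terms of $\sigma_{\min}(N)$, which the paper's argument does not --- and it sidesteps the (small, implicit) step that a nondegenerate simplex has nonempty interior. The paper's version avoids the pseudoinverse and singular values entirely, using only the existence of a basis for the image space. Both are complete; your closing remark about why $n\le d$ is needed (surjectivity, otherwise the image lies in a proper subspace of $\mathbb{R}^n$) is also accurate.
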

\begin{proof}
    Consider a point $x_0$ in the interior of $\mathcal{D}$.
    By definition, there exists an open ball of radius $\epsilon > 0$ that is centered at $x_0$ and lies within $\mathcal{D}$.
    We can choose $n+1$ points within that ball, $x_0,x_1,...,x_n$, which defines the polytope $\mathrm{conv}(x_0,x_1,...,x_n) \subset \mathcal{D}$.
    The image of these points under $N$ is defined as $v_i = N x_i$ for all $i \in \{0,1,...,n\}$, such that $\mathrm{conv}(v_0,v_1,...,v_n) \subset \mathcal{H}$.
    If the vectors $v_0 - v_1, v_0 - v_2, ..., v_0 - v_n$ can be chosen to be linearly independent, then $\mathrm{conv}(v_0,v_1,...,v_n)$ has nonempty interior and hence $\mathcal{H}$ has nonempty interior.
    To see that this is possible, first note for all $i \in \{ 1,2,...,n \}$ we can choose $x_i$ such that $x_0 - x_i = \frac{\epsilon}{2 } w_i$ for any unit vector $w_i$ of dimension $d$.
    It follows that $z_i = v_0 - v_i = N (x_0 - x_i) = \frac{\epsilon}{2 } N w_i$ for all $i \in \{ 1,2,...,n \}$.
    Since the rank of $N$ is $n$, the dimension of the image space of $N$ is $n$ and hence there exists a $w_1, w_2, ..., w_n$ such that $z_1, z_2, ..., z_n$ are linearly independent (i.e. a basis for the image space).
\end{proof}
From Assumption \ref{ass:init_set}, we also have that $\mathcal{G}^0$ has nonempty interior which implies that $\mathcal{X}$ has nonempty interior as $\mathcal{X} \supset \mathcal{G}^0$.
Since we can write $\mathcal{Y} = \Theta_* \mathcal{X}$ and $\mathcal{X}$ has nonempty interior, then by Lemma \ref{lem:nonemp_int}, we know that $\mathcal{Y}$ has nonempty interior.
We also know that $\mathcal{Y}$ is compact because $\mathcal{A}$ and $\mathcal{E}$ are compact and a finite dimensional linear operator preserves compactness.
Since $\mathcal{Y}$ is compact and has nonempty interior, the sharpness of $\mathcal{Y}$ is well defined.

We can then give the proof of Lemma \ref{lem:term_i}.

\begin{proof}
Without further reference to it, we take $\Theta_*$ to be in $\mathcal{C}_t$ for all $t \in [T]$, which holds with probability at least $1 - \delta$ by Theorem \ref{thm:conf_set}.
Similarly, we take $\lambda_{\text{min}} (V_{T'}) \geq \nu + \frac{\lambda_- T'}{2}$, which holds with probability at least $1 - \delta$ given Lemma \ref{lem:min_eig} and that $T' \geq t_\delta$.
By the union bound, both $\Theta_* \in \mathcal{C}_t, \forall t \in [T]$ and $\lambda_{\text{min}} (V_{T'}) \geq \nu + \frac{\lambda_- T'}{2}$ jointly hold with probability at least $1 - 2 \delta$.

First, we define an expanded version of the confidence set for the unknown parameter,
\begin{equation*}
    \tilde{\mathcal{C}}_t := \left\{ [\theta^1\ \theta^2\ ...\ \theta^n]^\top \in \mathbb{R}^{n \times d} : \left\| \theta^i - \theta_*^i \right\|_{V_t} \leq 2 \sqrt{\beta_t}, \forall i \in [n] \right\} \supseteq \mathcal{C}_t.
\end{equation*}
This holds because for all $\Theta$ in $\mathcal{C}_t$, we have for all $i \in [n]$ that 
\begin{equation*}
    \left\| \theta^i - \theta_*^i \right\|_{V_t} = \left\| \theta^i - \hat{\theta}_t^i + \hat{\theta}_t^i - \theta_*^i \right\|_{V_t} \leq \left\| \theta^i - \hat{\theta}_t^i \right \|_{V_t} + \left\| \hat{\theta}_t^i - \theta_*^i \right\|_{V_t} \leq 2 \sqrt{\beta_t}.
\end{equation*}
We use this to the define a shrunk safe action set:
\begin{equation}
    \tilde{\mathcal{G}}_t := \{ x \in \mathcal{A} : \Theta x \in \mathcal{E}, \forall \Theta \in \tilde{\mathcal{C}}_t \} \subseteq \mathcal{G}_t
\end{equation}
We then use Lemma \ref{lem:min_eig} to define a further shrunk safe price set, such that for $t \geq T'$, we have that
\begin{equation}
\label{eqn:shrunk_price}
    \bar{\mathcal{G}} = \{ x \in \mathcal{A} : \Theta_* x + v \in \mathcal{E}, \forall v \in \bar{\mathcal{B}}_{\infty}(\ell) \} \subseteq \tilde{\mathcal{G}}_t,
\end{equation}
where $\ell := \frac{2  \sqrt{2 \beta_T} L }{\sqrt{2 \nu + \lambda_- T'}}$.
To see that \eqref{eqn:shrunk_price} holds, note that for any $x$ and $\Theta \in \tilde{\mathcal{C}}_t$ we have for all $i \in [n]$ that
\begin{align*}
    x^\top \theta^i & \in \left[x^\top \theta^i_* - 2 \sqrt{\beta_t} \| x \|_{V_t^{-1}}, x^\top \theta^i_* + 2 \sqrt{\beta_t} \| x \|_{V_t^{-1}} \right]\\
    & \subseteq \left[ x^\top \theta^i_* - \frac{2 \sqrt{\beta_T} L}{\sqrt{\lambda_{min}(V_t)}}, x^\top \theta^i_* + \frac{2 \sqrt{\beta_T} L}{\sqrt{\lambda_{min}(V_t)}} \right]\\
    & \subseteq \left[ x^\top \theta^i_* - \frac{2  \sqrt{2 \beta_T} L }{\sqrt{2 \nu + \lambda_- T'}}, x^\top \theta^i_* + \frac{2  \sqrt{2 \beta_T} L }{\sqrt{2 \nu + \lambda_- T'}} \right]\\
    & = \left[ x^\top \theta^i_* - \ell, x^\top \theta^i_* + \ell \right].
\end{align*}
Therefore any $x$ such that $\Theta_* x + v$ is in $\mathcal{E}$ for all $v \in \bar{\mathcal{B}}_{\infty}(\ell)$ will also ensures that $\Theta x$ is in $\mathcal{E}$ for all $\Theta \in \tilde{\mathcal{C}}_t$ and hence $\bar{\mathcal{G}} \subseteq \tilde{\mathcal{G}}_t$.
Then we have,
\begin{align*}
    \bar{\mathcal{Y}} := \{ \Theta_* x : x \in \bar{\mathcal{G}} \} & = \{ \Theta_* x : x \in \mathcal{A} \} \cap \{ y : y + v \in \mathcal{E}, \forall v \in \bar{\mathcal{B}}_{\infty}(\ell) \}\\
    & = \{ \Theta_* x : x \in \mathcal{A} \} \cap \mathcal{E}_{\ell}^\infty \\
    & \supseteq \mathcal{Y}_{\ell}^\infty,
\end{align*}
where $\mathcal{Y} := \{ \Theta_* x : x \in \mathcal{A} \} \cap \mathcal{E}$.
We will also need the definition $\tilde{\mathcal{Y}}^t := \{ \Theta x : x \in \mathcal{G}_t, \Theta \in \mathcal{C}_t \}$.
Note that $\tilde{\mathcal{Y}}^t \supseteq  \bar{\mathcal{Y}} \supseteq \mathcal{Y}_{\ell}^\infty$ for $t \geq T'$.

In order to project on to $\mathcal{Y}_{\ell}^\infty$, we need that it is nonempty.
It is nonempty if the safe exploration phase is long enough such that $\ell \leq H_{\mathcal{Y}}^\infty$.
We can therefore ensure that $T'$ is sufficiently large as follows.
\begin{align*}
    \ell & = \frac{2  \sqrt{2 \beta_T} L }{\sqrt{2 \nu + \lambda_- T'}} \leq H_{\mathcal{Y}}^\infty\\
    T' & \geq \frac{8 \beta_T L^2}{\lambda_- \left( H_{\mathcal{Y}}^\infty \right)^2} - \frac{2 \nu}{\lambda_-} =: t_h
\end{align*}
Together with Lemma \ref{lem:min_eig}, we need that $T' \geq \max ( t_\delta, t_h )$.

We can now prove the statement of the lemma directly.
Since \eqref{eqn:optim_update} is optimistic over all $y$ in $\tilde{\mathcal{Y}}^t$ and $\tilde{\mathcal{Y}}^t \supseteq \mathcal{Y}_{\ell}^\infty$, we have that $f(\tilde{y}_t) \geq f(\bar{y})$, where $\tilde{y}_t := \tilde{\Theta}_t x_t$, $y_* = \Theta_* x_*$ and $\bar{y} \in \argmin_{y \in \mathcal{Y}_{\ell}^\infty} \| y_* - y \|_2$.
This yields
\begin{align*}
    \text{Term I} & := f(y_*) - f(\tilde{y}_t)\\
    & \leq f(y_*) - f(\bar{y})\\
    & \leq | f(y_*) - f(\bar{y}) |\\
    & \leq M \| y_* - \bar{y} \|\\
    & \leq M \mathrm{Sharp}_{\mathcal{Y}}^{\infty} (\ell)\\
    & = M \mathrm{Sharp}_{\mathcal{Y}}^{\infty} \left( \frac{2  \sqrt{2 \beta_T} L }{\sqrt{2 \nu + \lambda_- T'}} \right)
\end{align*}
The statement of the lemma immediately follows.
\end{proof}

\subsection{Term II}
\label{sec:term_ii}
First, we need a lemma from \cite{abbasi2011improved}.
\begin{lemma}
\label{lem:elip_pot}
    (Lemma 11 in \cite{abbasi2011improved}) For  $\{ x_t \}_{t=1}^\infty$ with $\nu > 0$ and $V_t = \nu I + \sum_{s=1}^t x_s \left[x_s\right]^\top$, we have that
    \begin{equation*}
        \sum_{t=1}^T \min( \| x_t \|_{[V_{t-1}]^{-1}}^2 , 1  ) \leq 2(d \log((\textrm{trace}(\nu I) + T L^2)/d) - \log \text{det} (\nu I))
    \end{equation*}
    when $\| x_t \|_2 \leq L$ for all $t \in [T]$.
\end{lemma}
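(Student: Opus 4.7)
The plan is to reproduce the standard elliptical potential lemma argument of Abbasi-Yadkori, P\'al and Szepesv\'ari. The key structural identity is the matrix determinant lemma applied to the rank-one update $V_t = V_{t-1} + x_t x_t^\top$, which gives
\begin{equation*}
\det(V_t) = \det(V_{t-1})\bigl(1 + \|x_t\|_{V_{t-1}^{-1}}^2\bigr).
\end{equation*}
Iterating from $t=1$ to $T$ and using $V_0 = \nu I$ yields the telescoping identity $\det(V_T) = \det(\nu I)\prod_{t=1}^T (1 + \|x_t\|_{V_{t-1}^{-1}}^2)$.

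Next I would pass to logarithms. The elementary numerical inequality $\min(u,1) \leq 2\log(1+u)$, valid for all $u \geq 0$ (it is verified by noting equality at $u=0$ and comparing derivatives on $[0,1]$, plus $\log(1+u) \geq \log 2 \geq 1/2$ for $u \geq 1$), lets me pointwise bound each term of the sum. Applying this with $u = \|x_t\|_{V_{t-1}^{-1}}^2$ and using the telescoping identity gives
\begin{equation*}
\sum_{t=1}^T \min\!\bigl(\|x_t\|_{V_{t-1}^{-1}}^2,\,1\bigr) \leq 2\sum_{t=1}^T \log\!\bigl(1 + \|x_t\|_{V_{t-1}^{-1}}^2\bigr) = 2\bigl(\log\det(V_T) - \log\det(\nu I)\bigr).
\end{equation*}

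It then remains to upper bound $\log \det(V_T)$ in terms of the trace. Since $V_T$ is positive definite, the AM-GM inequality applied to its eigenvalues gives $\det(V_T) \leq (\mathrm{trace}(V_T)/d)^d$. From the definition of $V_T$ and $\|x_s\|_2 \leq L$, the trace is bounded as $\mathrm{trace}(V_T) = \mathrm{trace}(\nu I) + \sum_{s=1}^T \|x_s\|_2^2 \leq \mathrm{trace}(\nu I) + TL^2$. Substituting yields
\begin{equation*}
\log\det(V_T) \leq d\log\!\bigl((\mathrm{trace}(\nu I) + TL^2)/d\bigr),
\end{equation*}
which combined with the previous display produces exactly the stated bound.

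There is no real obstacle here: the argument is a short chain of an algebraic identity (matrix determinant lemma), a numerical inequality ($\min(u,1) \leq 2\log(1+u)$), and AM-GM on eigenvalues. The only place that requires care is verifying the $\min(u,1) \leq 2\log(1+u)$ inequality on all of $[0,\infty)$ rather than just on a bounded range, but this follows from the two-case check above.
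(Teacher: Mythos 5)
Your proof is correct and follows exactly the standard argument from the cited reference (Abbasi-Yadkori et al.), which is all the paper itself relies on since it states this lemma without reproving it: the rank-one determinant identity, the numerical inequality $\min(u,1)\leq 2\log(1+u)$, and AM--GM on the eigenvalues of $V_T$ are precisely the three steps of the original proof. No gaps.
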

We then bound Term II with the following lemma.
\begin{lemma}
\label{lem:term_ii}
    Let Assumptions \labelcref{ass:bounds,ass:init_set,ass:lipsch,ass:noise} hold. For $T > T' \geq \max(t_h, t_\delta)$, $\mathrm{Term\ II}$ is bounded as
    \begin{equation*}
        \sum_{t=T'+1}^T \mathrm{Term\ II} \leq M \max (LS,1) \sqrt{n 8 \beta_T (T - T') d \log( 1 + T L^2/ (d \nu) )}.
    \end{equation*}
    with probability at least $1 - \delta$.
\end{lemma}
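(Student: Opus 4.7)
The plan is to follow the standard Lipschitz-plus-elliptical-potential template, applied coordinate-by-coordinate because both $\tilde{\Theta}_t$ and $\Theta_*$ are governed by one confidence ellipsoid per row. First I would invoke Assumption~\ref{ass:lipsch} to reduce Term~II to $M\|(\tilde{\Theta}_t-\Theta_*)x_t\|_2 = M\sqrt{\sum_i \bigl((\tilde{\theta}_t^i-\theta_*^i)^\top x_t\bigr)^2}$, turning the regret analysis into a coordinate-wise study of $|(\tilde{\theta}_t^i-\theta_*^i)^\top x_t|$ for each $i\in[n]$.

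Next I would establish two competing bounds per coordinate. On the event from Theorem~\ref{thm:conf_set} that $\Theta_* \in \mathcal{C}_{t-1}$, together with $\tilde{\Theta}_t \in \mathcal{C}_{t-1}$ by construction, the triangle inequality in the $V_{t-1}$ norm yields $\|\tilde{\theta}_t^i-\theta_*^i\|_{V_{t-1}} \leq 2\sqrt{\beta_{t-1}}\leq 2\sqrt{\beta_T}$, and then Cauchy--Schwarz in the $V_{t-1}$ inner product gives the ellipsoid bound $|(\tilde{\theta}_t^i-\theta_*^i)^\top x_t| \leq 2\sqrt{\beta_T}\,\|x_t\|_{V_{t-1}^{-1}}$. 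Separately, the a priori norm bound $\|\theta^i\|_2\leq S$ from Assumption~\ref{ass:bounds}, applied to both rows (intersecting $\mathcal{C}_{t-1}$ with $\mathcal{C}^0$ if needed so that $\tilde{\theta}_t^i$ is also controlled), combined with $\|x_t\|_2 \leq L$, gives the trivial bound $|(\tilde{\theta}_t^i-\theta_*^i)^\top x_t| \leq 2LS$. Taking the minimum and using the elementary inequality $\min(a,c) \leq \max(c,1)\min(a,1)$ for $a,c\geq 0$ produces
\begin{equation*}
|(\tilde{\theta}_t^i-\theta_*^i)^\top x_t| \;\leq\; 2\max(LS,1)\,\min\!\bigl(\sqrt{\beta_T}\,\|x_t\|_{V_{t-1}^{-1}},\,1\bigr).
\end{equation*}

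The final step is to sum and invoke the elliptical-potential lemma. Squaring and summing over $i\in[n]$ gives $\mathrm{Term\ II}^2 \leq 4M^2\, n \max(LS,1)^2\, \min\bigl(\beta_T\|x_t\|_{V_{t-1}^{-1}}^2,1\bigr)$. Applying Cauchy--Schwarz over $t\in(T',T]$ then bounds $\sum_t \mathrm{Term\ II}$ by $2M\sqrt{n}\max(LS,1)$ times $\sqrt{(T-T')\sum_t \min(\beta_T\|x_t\|_{V_{t-1}^{-1}}^2,1)}$. Since $\beta_T \geq 1$ in the regime of interest, $\min(\beta_T u,1)\leq \beta_T\min(u,1)$, which lets me factor $\beta_T$ out and apply Lemma~\ref{lem:elip_pot} to bound the remaining sum by $2d\log\bigl(1+TL^2/(d\nu)\bigr)$. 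Collecting constants gives exactly the stated bound, with the probability $1-\delta$ inherited directly from the good event of Theorem~\ref{thm:conf_set}.

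The only genuinely delicate step is the $\min$-combination in the second paragraph: relying solely on the ellipsoid bound would leave a trailing $\|x_t\|_{V_{t-1}^{-1}}^2$ that Lemma~\ref{lem:elip_pot} cannot directly consume without a truncation, while relying solely on the trivial $2LS$ bound would give a vacuous linear-in-$T$ regret. The inequality $\min(a,c)\leq \max(c,1)\min(a,1)$ is precisely what converts the informative-but-untruncated bound into the truncated form the elliptical-potential lemma consumes, while leaving $\sqrt{\beta_T}$ inside the square root and $\max(LS,1)$ cleanly outside it.
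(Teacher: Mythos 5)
Your proof is correct and follows the same Lipschitz-then-elliptical-potential template as the paper, but the two arguments diverge at the truncation step, and the difference is worth spelling out. You cap the per-coordinate error $|(\tilde{\theta}_t^i-\theta_*^i)^\top x_t|$ by the a priori bound $2LS$, which requires $\|\tilde{\theta}_t^i\|_2\le S$ --- a property the confidence set $\mathcal{C}_{t-1}$ in \eqref{eqn:conf_set} does \emph{not} enforce, so your parenthetical about intersecting with $\mathcal{C}^0$ is not optional: without that modification to the algorithm (or a separate argument bounding $\|\tilde{\theta}_t^i\|_2$ via $\|\hat\theta_t^i\|_2$ and the ellipsoid radius) the trivial bound has a gap. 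The paper instead caps the same quantity by $H_{\mathcal{Y}}^\infty$, obtained by chaining the ellipsoid bound $2\sqrt{\beta_t}\|x_t\|_{V_{t-1}^{-1}}$ with Lemma~\ref{lem:min_eig} and the condition $T'\ge t_h$, which yields $2\sqrt{\beta_t}\|x_t\|_{V_{t-1}^{-1}}\le 2\sqrt{2\beta_T}L/\sqrt{2\nu+\lambda_- T'}\le H_{\mathcal{Y}}^\infty$; this avoids any need to control $\|\tilde{\theta}_t^i\|_2$, at the price of conditioning on the minimum-eigenvalue event as well (so the paper's proof actually lives on a $1-2\delta$ event, not the $1-\delta$ claimed in the lemma statement). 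Amusingly, your route reproduces the lemma exactly as printed, with $\max(LS,1)$, whereas the paper's own proof and Theorem~\ref{thm:main} both carry $\max(H_{\mathcal{Y}}^\infty,1)$ --- the lemma statement and its proof are inconsistent in the paper, and your derivation happens to match the statement rather than the proof. The remaining mechanics (the inequality $\min(a,c)\le\max(c,1)\min(a,1)$, factoring $\beta_T\ge 1$ out of the truncated sum, Cauchy--Schwarz over $t$, and Lemma~\ref{lem:elip_pot}) are identical in substance to the paper's.
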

\begin{proof}
As in the proof of Lemma \ref{lem:term_i}, we take $\Theta_* \in \mathcal{C}_t, \forall t \in [T]$ and $\lambda_{\text{min}} (V_{T'}) \geq \nu + \frac{\lambda_- T'}{2}$, which jointly holds with probability at least $1 - 2 \delta$.

Using Assumption \ref{ass:lipsch}, we have for $t > T'$ that
\begin{align*}
    \mathrm{Term\ II} := & f(\tilde{\Theta}_t x_t) - f(\Theta_* x_t)\\
    \leq &  | f(\tilde{\Theta}_t x_t) - f(\Theta_* x_t) |\\
    \leq &  M \| \tilde{\Theta}_t x_t - \Theta_* x_t \|_2 \\
    \leq & M \sqrt{n} \max_{i \in [n]} \left( x_t^{\top} \tilde{\theta}^i_t - x_t^{\top} \theta^i_{*} \right)
\end{align*}
Let $\bar{r}_t^{II} :=  x_t^{\top} \tilde{\theta}^i_t - x_t^{\top} \theta^i_{*}$ such that
\begin{align*}
    \bar{r}_t^{II} := & x_t^{\top} \tilde{\theta}^i_t - x_t^{\top} \theta^i_{*}\\
    \leq & \left \| x_t \right \|_{[V_{t-1}]^{-1}} \left \| \tilde{\theta}^i_t  - \theta^i_{*} \right \|_{V_{t-1}}\\
    \leq & \left \| x_t \right \|_{[V_{t-1}]^{-1}} \left \| \tilde{\theta}^i_t -  \hat{\theta}_t^i +   \hat{\theta}_t^i - \theta^i_{*} \right \|_{V_{t-1}}\\
    \leq & 2 \sqrt{\beta_t} \left \| x_t \right \|_{[V_{t-1}]^{-1}}.
\end{align*}
Due to the pure exploration phase, we have that
\begin{equation*}
   2 \sqrt{\beta_t} \left \| x_t \right \|_{[V_{t-1}]^{-1}} \leq \frac{2 \sqrt{\beta_T} L}{\sqrt{\lambda_{min}(V_{t-1})}} \leq \frac{2  \sqrt{2 \beta_T} L }{\sqrt{2 \nu + \lambda_- T'}} \leq H_{\mathcal{Y}}^\infty.
\end{equation*}
Therefore, the following bound applies when $T$ is large enough such that $\beta_T \geq 1$.
\begin{align*}
    \bar{r}_t^{II} \leq & \min \left ( H_{\mathcal{Y}}^\infty, 2 \sqrt{\beta_t} \left \| x_t \right \|_{[V_{t-1}]^{-1}} \right )\\
    \leq & 2 \sqrt{\beta_T} \max (H_{\mathcal{Y}}^\infty,1) \min \left ( 1, \left \| x_t \right \|_{[V_{t-1}]^{-1}} \right )
\end{align*}
We can then use Lemma \ref{lem:elip_pot} as
\begin{align*}
    \sum_{t=T'+1}^T [\bar{r}_t^{II}]^2 = & 4 \beta_T \max ((H_{\mathcal{Y}}^\infty)^2,1) \sum_{t=T'+1}^T \min \left ( 1, \left \| x_t \right \|_{[V_{t-1}]^{-1}}^2 \right )\\
    \leq & 4 \beta_T \max ((H_{\mathcal{Y}}^\infty)^2,1) \sum_{t=1}^T \min \left ( 1, \left \| x_t \right \|_{[V_{t-1}]^{-1}}^2 \right )\\
     \leq & 8 \beta_T \max ((H_{\mathcal{Y}}^\infty)^2,1) (d \log((\textrm{trace}(\nu I) + T L^2)/d) - \log \text{det} (\nu I))\\
    = & 8 \beta_T \max ((H_{\mathcal{Y}}^\infty)^2,1) (d \log((d \nu + T L^2)/d) -  d \log(\nu))\\
    = & 8 \beta_T \max ((H_{\mathcal{Y}}^\infty)^2,1) d \log( 1 + T L^2/ (d \nu) ).
\end{align*}
Therefore, applying Cauchy-Schwarz yields
\begin{align*}
    \sum_{t=T'+1}^T \bar{r}_t^{II} \leq & \sqrt{ (T - T') \sum_{t=T' + 1}^T [r_{t,i}^{II}]^2 }\\
    \leq & \max (H_{\mathcal{Y}}^\infty,1) \sqrt{ 8 \beta_T (T - T') d \log( 1 + T L^2/ (d \nu) )}.
\end{align*}

We then have that
\begin{equation*}
    \sum_{t=T'+1}^T \mathrm{Term\ II} \leq M \max (H_{\mathcal{Y}}^\infty,1) \sqrt{n d 8 \beta_T (T - T') \log( 1 + T L^2/ (d \nu) )}.
\end{equation*}
\end{proof}

\subsection{Complete Regret Bound}
\label{sec:comp_bound}

Finally, we can prove Theorem \ref{thm:main}.
\begin{proof}
    First, we have the trivial bound on the instantaneous regret as
    \begin{equation*}
        r_t := f(\Theta_* x_*) - f(\Theta_* x_t) \leq M \| \Theta_* x_* - \Theta_* x_t\|_2 \leq M \sqrt{n} \max_{i \in [n]} \left( x_*^\top \theta_*^i - x_t^\top \theta_*^i \right) \leq 2 M \sqrt{n} LS
    \end{equation*}
    We use this trivial bound for the first $T'$ time steps.
    Therefore, we can decompose the regret with respect to time and use Lemma \ref{lem:term_i} and Lemma \ref{lem:term_ii} to get
    \begin{align*}
        R_T & = \sum_{t=1}^{T'} r_t + \sum_{t=T'+1}^T (\mathrm{Term\ I}) + \sum_{t=T'+1}^T (\mathrm{Term\ II}) \\
        & = 2 M \sqrt{n} LS T' +  M \max (H_{\mathcal{Y}}^\infty,1) \sqrt{n 8 \beta_T (T - T') d \log( 1 + T L^2/ (d \nu) )} \\
        & \quad  + M (T - T') \mathrm{Sharp}_{\mathcal{Y}}^{\infty} \left( \frac{2  \sqrt{2 \beta_T} L }{\sqrt{2 \nu + \lambda_- T'}} \right)
    \end{align*}
\end{proof}

\section{Numerical Experiments}
\label{apx:exps}
The numerical experiments were performed in a setting of action dimension $d = 3$ and response dimension $n=3$.
The reward function is linear, i.e. $f(y) = a^{\top} y$ for some $a \in \mathbb{R}^n$.
To methodically study the impact of sharpness, we use safety sets of the form $\mathcal{E}_b = \{ y \in \mathbb{R}^n : \| \mathrm{diag}(b) y \|_1 \leq 1 \}$ for some $b \in \mathbb{R}^n$.
By appropriately choosing $b$, we can modify the sharpness of the set without changing the maximum shrinkage.
The action set $\mathcal{A}$ is chosen to be non-restrictive such that $\mathcal{Y} = \mathcal{X}$.

The three safety sets that we study are $\mathcal{E}_{b_1}$, $\mathcal{E}_{b_2}$ and $\mathcal{E}_{b_3}$, where $b_1 = [0.1,0.1,0.1]$, $b_2 = [0.1/2,0.1,0.1]$ and $b_3 = [0.1/3,0.1,0.1]$.
Note that $\mathcal{E}_{b_1}$, $\mathcal{E}_{b_2}$ and $\mathcal{E}_{b_3}$ all have a $2$-norm maximum shrinkage of $10$.
To isolate the impact of sharpness, we ensure that the optimal response $y_* = \theta_* x_*$ is in the ``sharpest" corner of the polytope by choosing $a = \mathbf{e}_1$.
The parameter $\Theta$ is sampled randomly by choosing each element $\Theta^{i,j} \sim U[-1,1]$ for all $i \in [n]$ and $j \in [d]$.
Each element of the noise $\epsilon_t$ is sampled $\epsilon_t^i \sim N(\sigma^2)$ for all $i \in [n]$, where $\sigma=10^{-3}$.
Also, we choose $T = 10^3$.
Additionally, $S = \| \Theta_* \|_2 + 0.1$ and $L = \max_{b \in \{b_1,b_2,b_3\}} \max_{x \in \mathcal{E}_b} \| x \|_2 + 0.1$.
The algorithm parameters are $\nu = 0.1$, $T' = T^{2/3}=10^2$, and $\delta = 0.01$.

At each time step, the optimistic action in \eqref{eqn:optim_update} is calculated using the $\ell_1$ confidence set that is an outer approximation of $\mathcal{C}_t$ as used in \cite{dani2008stochastic} and \cite{amani2019linear}.
Specifically, we use the method detailed in \cite{amani2019linear} which we summarize as follows.
Since $f$ is linear, \eqref{eqn:optim_update} can be solved by enumerating the vertices of $\ell_1$ confidence set and optimizing the reward given the parameter at each vertex.
The optimal reward is then the maximum of the rewards at each vertex and the optimistic action is the maximizing action for this reward.

\end{document}